\documentclass{article}

\usepackage{microtype}
\usepackage{graphicx}
\usepackage{subcaption}
\usepackage{booktabs} 
\usepackage{multirow}
\usepackage{enumitem}

\usepackage{hyperref}

\usepackage[accepted]{icml2026}

\usepackage{amsmath}
\usepackage{amssymb}
\usepackage{mathtools}
\usepackage{amsthm}

\usepackage[capitalize,noabbrev]{cleveref}

\theoremstyle{plain}
\newtheorem{theorem}{Theorem}[section]

\newtheorem{corollary}[theorem]{Corollary}
\theoremstyle{definition}
\newtheorem{definition}[theorem]{Definition}
\newtheorem{assumption}[theorem]{Assumption}
\theoremstyle{remark}
\newtheorem{remark}[theorem]{Remark}

\usepackage[textsize=tiny]{todonotes}

\icmltitlerunning{Geometry-based Schrödinger Bridges for Trustworthy Multimodal Fusion}

\begin{document}

\twocolumn[
  \icmltitle{Geometry-based Schrödinger Bridges for Trustworthy Multimodal Fusion}



  \icmlsetsymbol{cor}{*}

  \begin{icmlauthorlist}
    \icmlauthor{Jiayu Xiong}{hqu,cvlab}
    \icmlauthor{Jing Wang}{cor,hqu,cvlab}
    \icmlauthor{Qi Zhang}{cor,tongji}
    \icmlauthor{Wanlong Wang}{hqu,cvlab}
    \icmlauthor{Jun Xue}{whu}
  \end{icmlauthorlist}

  \icmlaffiliation{hqu}{Department of Computer Science and Techonology, Huaqiao University}
  \icmlaffiliation{cvlab}{Xiamen Key Laboratory of Computer Vision and Pattern Recognition, Huaqiao University}
  \icmlaffiliation{tongji}{Tongji University}
  \icmlaffiliation{whu}{School of Cyber Science and Engineering, Wuhan University}

  \icmlcorrespondingauthor{Jing Wang}{wroaring@hqu.edu.cn}
  \icmlcorrespondingauthor{Qi Zhang}{zhangqi\_cs@tongji.edu.cn}

  \icmlkeywords{Machine Learning, ICML}

  \vskip 0.3in
]



\printAffiliationsAndNotice{$^*$Corresponding authors.\\}

\begin{abstract}
Real-world multimodal systems must be robust against low-quality data, such as sensor noise, incomplete multimodal data and conflicting inputs. However, existing trustworthy fusion methods rely on the model's own prediction confidence to judge data quality. This creates a circular dependency: when a model is confident but wrong, these methods fail to detect the error. To break this loop, we propose Geometry-based Multimodal Fusion (GMF). Instead of relying on predictions, we evaluate reliability by measuring how much transport correction the input needs in latent space. We implement Diffusion Schrödinger Bridge transport with Rectified Flow, where the squared initial velocity gives an efficient learned correction score. Valid data has low squared velocity magnitude, while noisy, incomplete data or conflicting data requires stronger transport correction. This geometry-based reliability signal acts as an independent judge, effectively flagging unreliable inputs even when the classifier is fooled. Extensive experiments demonstrate that GMF significantly improves robustness against severe sensor noise and semantic conflicts compared to confidence-based baselines.
\end{abstract}

\section{Introduction}

Our perception of the world relies on multiple senses. Similarly, modern AI systems integrate diverse data, such as sensors in autonomous driving or medical scans in diagnosis~\citep{baltruaitis2019multimodal}. However, real-world data is rarely perfect. Systems frequently encounter low-quality inputs, including sensor noise or offline (incomplete data that some modalities are missing), and semantic conflicts (where different modalities contradict each other)~\citep{ma2022multimodal}. Under these conditions, standard fusion methods that simply mix features often fail~\citep{wang2020makes}. If one modality is low-quality, it spreads errors to the joint representation. To solve this, dynamic fusion has emerged as a robust alternative. By processing each input independently first, this strategy aims to isolate bad data and prevent it from ruining the final decision~\cite{zhang2024multimodal}.

The success of dynamic fusion depends entirely on one question: \textit{how do we accurately measure the quality of a modality?} This challenge has established the field of Trustworthy Multimodal Fusion~\citep{han2022trusted}. The goal is to design mechanisms that can automatically detect and down-weight unreliable data. Existing state-of-the-art methods, such as entropy-based quality-aware~\citep{zhang2023provable} and uncertainty estimation~\citep{han2022trusted}, rely on statistical reliability. They operate on a essential assumption: if a model is unsure about its prediction (e.g., the output probability is flat), the input data is likely low-quality. These methods calculate confidence scores from the classifier's output and use them to determine the fusion weights.

However, relying on the model's own output creates a dangerous circular dependency. The system uses the prediction to check if the prediction itself is reliable. This approach fails critically when the data quality is very low. Deep neural networks are known to be overconfident~\citep{guo2017calibration}. A model often makes a wrong prediction but with very high confidence~\citep{ovadia2019can}. Because the statistical method only looks at the confidence score, it cannot distinguish between a correct prediction and a confident error. As a result, the fusion mechanism fails to reject the corrupted modality, leading to system failure.

\begin{figure*}[htbp]
\centering
\begin{subfigure}[b]{0.585\textwidth}
\centering
\includegraphics[width=\linewidth]{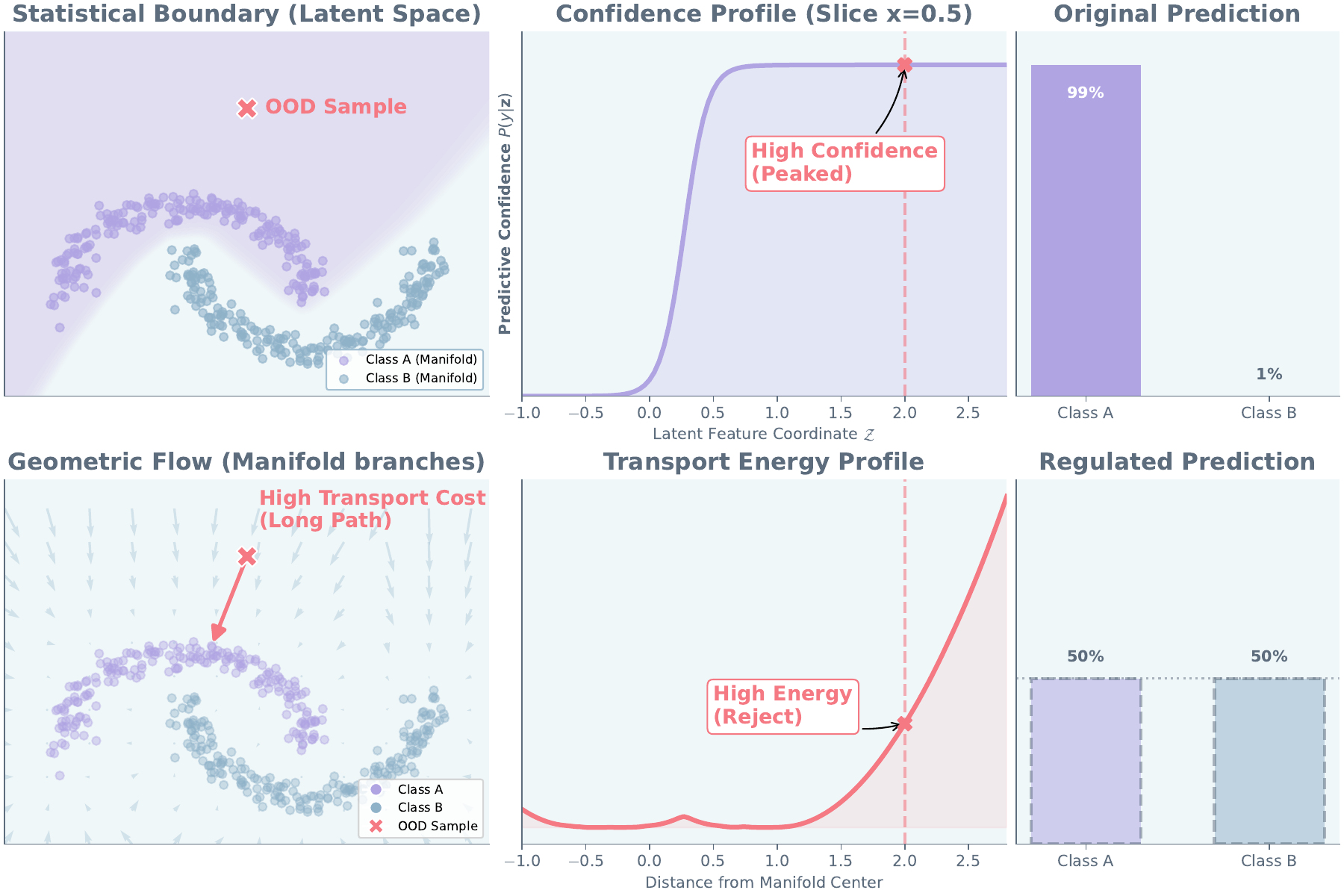}
\caption{Intra-modal Analysis}
\label{fig:intra}
\end{subfigure}
\hfill
\begin{subfigure}[b]{0.39\textwidth}
\centering
\includegraphics[width=\linewidth]{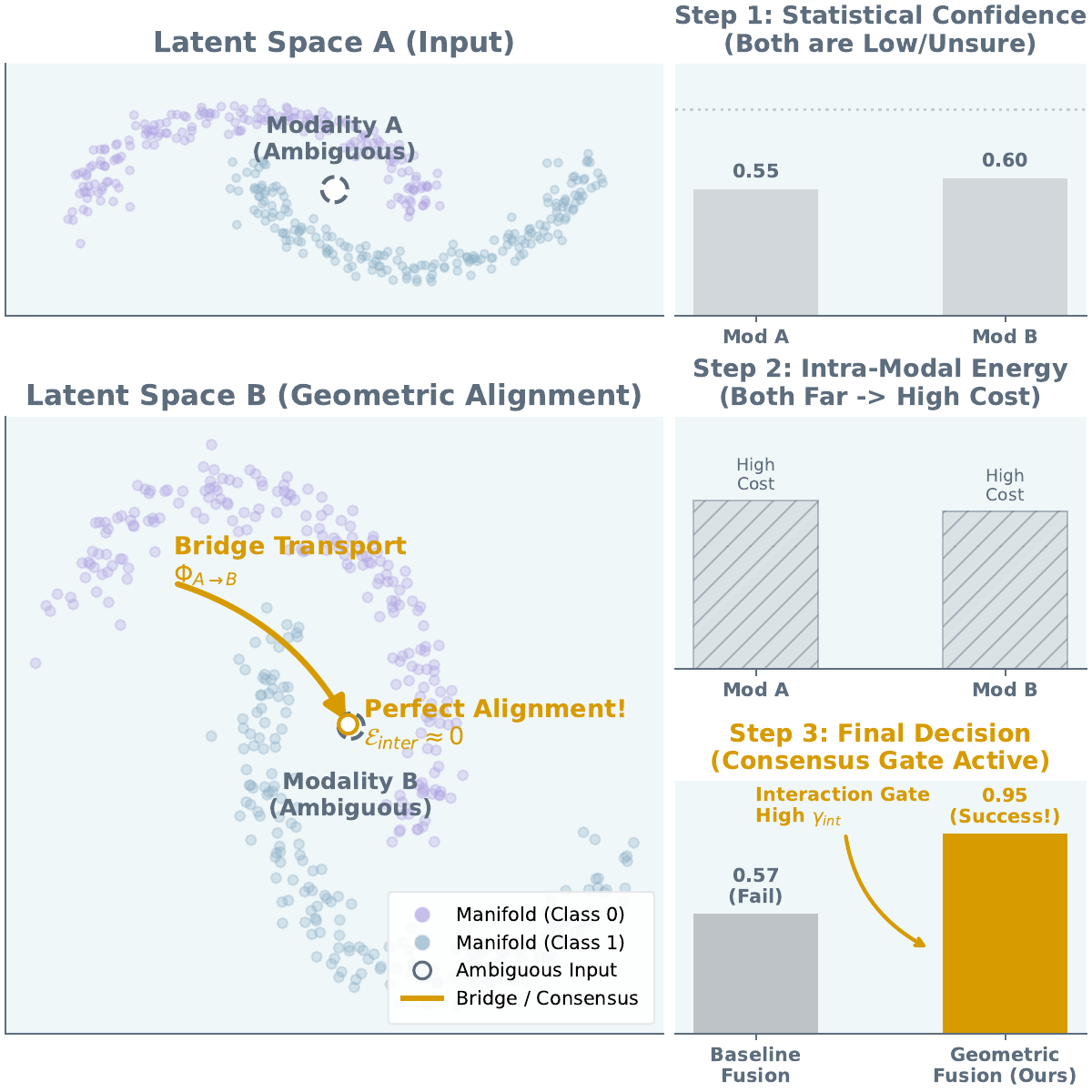}
\caption{Inter-modal Analysis}
\label{fig:inter}
\end{subfigure}
\caption{Overview of the proposed Geometry-based Multimodal Fusion (GMF). (a) \textbf{Intra-modal Analysis}: We utilize Rectified Flow to estimate how much correction an input needs in latent space. Noisy inputs or outliers require higher transport energy (squared velocity magnitude), serving as a geometric indicator of unreliability. (b) \textbf{Inter-modal Analysis}: Semantic conflicts are detected by measuring the high cross-modal transport costs between mismatched modalities, ensuring that conflicting information does not corrupt the joint decision.}
\label{fig:main_architecture}
\end{figure*}

To break this dependency, we propose a Geometry-based Multimodal Fusion (GMF). Instead of asking the classifier ``\textit{are you confident?}", we use a reliability signal computed from latent geometry rather than classifier logits. Consider clean and valid data as samples that follow a learned latent transport pattern. To implement this perspective, transport energy is estimated by the squared initial velocity, and used to measure how much correction an input needs. We employ Diffusion Schrödinger Bridge transport with Rectified Flow~\citep{liu2023flow}, which creates a straight path for this movement. The logic is simple: valid data needs little correction; noisy or conflicting data needs high correction. As shown in Fig.~\ref{fig:main_architecture}, this energy acts as an independent judge. It accurately identifies sensor noise (Intra-modal) and semantic conflicts (Inter-modal), even when the classifier is foolishly confident.
Our contributions are:
\begin{itemize}
\item \textbf{Identify} the ``circular dependency" flaw in confidence-based fusion and propose a geometric paradigm that evaluates reliability using latent transport costs.
\item \textbf{Integrate} Diffusion Schrödinger Bridge transport with Rectified Flow to create an efficient velocity-based metric. This quantifies modality quality using squared initial velocity, decoupled from the decision boundary.
\item \textbf{Derive} the fusion mechanism as the global minimizer of a geometric energy objective, providing a conditional geometric guarantee for robustness against sensor noise and semantic conflicts.
\end{itemize}
Extensive experiments on four benchmarks demonstrate that GMF significantly outperforms statistical baselines, effectively filtering out corrupted modalities in scenarios involving severe sensor noise and semantic conflicts.

\section{Related Work}

\textbf{Trustworthy Multimodal Learning.}
Trustworthy multimodal learning has evolved from static fusion strategies to dynamic, evidence-based paradigms designed to mitigate the impact of unimodal noise~\cite{han2022trusted, baltruaitis2019multimodal}. Prominent approaches like QMF~\cite{zhang2023provable} and the recent PDF~\cite{cao2024predictive} utilize evidential theory or predictive entropy to adaptively reweight modalities based on sample-level quality. To address semantic discrepancies, recent advancements (e.g., AISTATS 2025~\cite{bezirganyan2025multimodal}) have further introduced conflict-aware mechanisms that discount belief masses when modalities disagree. However, these methods fundamentally rely on the calibration of the classifier's statistical output, assuming that low predictive entropy correlates with high data quality~\cite{ma2022multimodal}, an assumption that holds in mild noise regimes but often falters under severe corruption.

\textbf{Uncertainty Quantification in Deep Learning.}
Uncertainty quantification serves as the backbone for dynamic fusion, with deterministic methods like Evidential Deep Learning (EDL)~\cite{sensoy2018evidential} and Energy-based Models~\cite{liu2020energy} being preferred for their computational efficiency over Bayesian ensembles. Despite their popularity, standard evidential regression and entropy-based metrics are widely recognized to suffer from the "overconfidence" problem~\cite{guo2017calibration}, where deep models assign high confidence scores to out-of-distribution or adversarial samples. Consequently, relying solely on these introspective statistical measures can lead to failure modes in safety-critical scenarios where the model is confident yet erroneous~\cite{kendall2017uncertainties}, prompting the need for external verification mechanisms.

\textbf{Generative Geometry and Optimal Transport.}
Beyond statistical metrics, generative models and Optimal Transport (OT) have emerged as powerful tools for characterizing the geometric structure of data manifolds. Rectified Flows~\cite{liu2023flow} and Diffusion Schrödinger Bridges (DSB)~\cite{de2021diffusion} are increasingly utilized not just for generation, but for anomaly detection by measuring reconstruction likelihoods or transport trajectory lengths~\cite{pinaya2022fast}. Similarly, Wasserstein distance estimations have been applied in domain adaptation to quantify distribution shifts~\cite{courty2016optimal}. These works demonstrate that geometric transport costs can serve as proxies for data reliability, offering a measure of deviation that is independent of the classification decision boundary.

\section{Methodology}
\label{sec:method}
We propose Geometry-based Multimodal Fusion (GMF) to assess reliability via latent transport geometry. Using Rectified Flow, we estimate the transport correction required by each sample and use these geometric costs to determine fusion weights.

\begin{table}[t]
\centering
\caption{Notation used in GMF.}
\label{tab:notation}
\small
\setlength{\tabcolsep}{8pt}
\renewcommand{\arraystretch}{1.15}
\begin{tabular}{@{}p{0.32\linewidth}p{0.56\linewidth}@{}}
\toprule
Symbol & Meaning \\
\midrule
$x^{(m)}$, $z^{(m)}$ & Input, latent feature \\
$E^{(m)}$ & Encoder \\
$v_\theta^{(m)}$ & Intra-modal velocity field \\
$v_\Phi^{(a\to b)}$ & Cross-modal velocity field \\
$\mathcal{P}_{\text{prior}}$ ($\mathcal{P}_{\text{src}}$) & Reference prior distribution \\
$\mathcal{E}_{\text{intra}}^{(m)}$ & Intra-modal transport cost \\
$\mathcal{E}_{\text{inter}}^{(a\to b)}$ & Inter-modal transport cost \\
$\beta_{\text{comp}}^{(m)}$ & Modal competition score \\
$\gamma_{\text{int}}^{(m)}$, $\tilde{\gamma}_{\text{int}}^{(m)}$ & Integration gate, stabilized gate \\
$\rho$, $\zeta$ & Conflict coefficient, exponent \\
$\epsilon_\gamma$, $\theta_r$ & Gate floor, conflict threshold \\
\addlinespace[2pt]
$\lambda_{\mathrm{geo}}$, $\lambda_{\mathrm{reg}}$ & Geometric, regularization weights\\
$w^{(m)}$, $\boldsymbol{\alpha}$ & Modal weight, evidence vector \\
$\tau$, $\kappa$, $\lambda$ & Temperature params, gate scale \\
\bottomrule
\end{tabular}
\end{table}

\subsection{Problem Formulation and Overview}
\label{sec:preliminaries}

\textbf{Problem.} We consider a multimodal classification task on a dataset $\mathcal{D} = \{(\{x_i^{(m)}\}_{m=1}^M, y_i)\}_{i=1}^N$ with $N$ samples. Each sample contains $M$ heterogeneous modalities and a label $y_i \in \mathbb R^K$. Each modality is mapped to a latent space $\mathbb{R}^d$ via an encoder $z^{(m)} = E^{(m)}(x^{(m)})$.

\textbf{Fusion Goal.} For each modality, a classifier $W_{\mathrm{cls}}^{(m)}$ predicts logits $\hat{\mathbf{y}}^{(m)}$. Our goal is to learn a dynamic weight generator $w^{(m)}(z)$ that aggregates predictions via $\hat{\mathbf{y}} = \sum_m w^{(m)} \hat{\mathbf{y}}^{(m)}$.

\textbf{Geometric Perspective.} Existing methods derive weights from logits $\hat{\mathbf{y}}^{(m)}$, creating a circular dependency. We instead generate weights directly from latent features $z^{(m)}$. This preserves intrinsic geometric information that is typically lost during the projection to class scores.

\subsection{Preliminaries: Diffusion Schrödinger Bridge}

Latent representations $z^{(m)}$ allow us to evaluate data quality via geometry. We employ the Diffusion Schrödinger Bridge (DSB) to model transport between a source distribution $\mathcal{P}_{\text{src}}$ and a target distribution $\mathcal{P}_{\text{dst}}$, avoiding the fixed Markov chains of standard diffusion. DSB optimizes the velocity field $\{v_t\}$ via
\begin{equation}
\label{eq:dsb}
\resizebox{\linewidth}{!}{
$
\min_{v_t} \int_0^1 \mathbb{E}_{z_t \sim \mathcal{P}_t} \big[\|v_t(z_t)\|^2\big] \, dt,
\ \  \text{s.t.} \ \  \partial_t \mathcal{P}_t + \nabla \cdot (\mathcal{P}_t v_t) = 0,
$
}
\end{equation}
to minimize the kinetic energy, where $t \in [0, 1]$, $\mathcal{P}_0=\mathcal{P}_{\text{src}}, \mathcal{P}_1=\mathcal{P}_{\text{dst}}$. Solving Eq.\eqref{eq:dsb} via iterative integration is too slow for real-time inference. We use Rectified Flow (RF) to linearize the transport trajectory, approximating the total cost through a single-step regression instead of integration. We parameterize the flow with a time-dependent vector field $v_\theta(z, t)$ and define a linear interpolation for $t \sim \mathcal{U}[0, 1]$:
\[
z_t = (1-t) z_0 + t z_1, \quad z_0 \sim \mathcal{P}_{\text{src}}, \quad z_1 \sim \mathcal{P}_{\text{dst}}.
\]
The network is optimized to regress the constant velocity field driving this linear path by minimizing the flow matching objective
\begin{equation}
\label{eq:rf}
\mathcal{L}_{\mathrm{RF}}(\theta) = \mathbb{E}_{t, z_0, z_1} \Big[ \big\| v_\theta(z_t, t) - (z_1 - z_0) \big\|^2 \Big].
\end{equation}
\textbf{Key Insight.} The rectified velocity field $v_\theta$ transports a source point toward a target distribution. The squared initial velocity $\|v_\theta(z, 0)\|^2$ measures the transport cost from $z$ (the source, at $t=0$) toward the target. Clean latent features cluster on the data manifold, so their transport cost reflects the cluster-to-target distance. Noisy or OOD inputs deviate from this manifold and incur higher transport energy. Since $v_\theta(z, 0)$ is evaluated at the observed feature $z$, which is the source of the flow, inference is in-distribution with training.

\subsection{Trustworthy Weight Generation}
\label{sec:quality_weight_gen}

We bridge geometric manifold learning and evidential classification. Each modality yields evidence $\mathbf{e}^{(m)} = \text{Softplus}(z^{(m)}W_{\mathrm{cls}}^{(m)})$. We modulate this evidence using scalar weights $w^{(m)}$ derived from transport dynamics.

\textbf{Intra-Modal Transport Cost.}
First, we assess the intrinsic quality of $z^{(m)}$ by measuring the transport cost toward a class-agnostic reference distribution $\mathcal{P}*{\text{prior}}$. The rectified flow transports $z^{(m)}$ toward $\mathcal{P}*{\text{prior}}$, the target of this intra-modal flow. Clean samples near the data manifold require minimal correction, while noisy or OOD samples incur higher transport energy. Using $v_\theta^{(m)}$, we define the intra-modal transport energy as a learned one-step correction score:
\begin{equation}
\label{eq:intra_cost}
    \mathcal{E}_{\text{intra}}^{(m)} = \big\| v_\theta^{(m)}(z^{(m)}, 0) \big\|_2^2.
\end{equation}
The field $v_\theta^{(m)}$ is calibrated on clean latent transport paths. We assume that larger squared velocity magnitude indicates larger deviation from the clean latent manifold. Under this assumption, low-quality inputs such as noisy samples trigger stronger learned transport correction. This makes $\mathcal{E}_{\text{intra}}^{(m)}$ a geometric detector for sensor failure and outliers.

\textbf{Inter-Modal Transport Cost.}
Second, we evaluate semantic consistency by transporting modality $a$ to the manifold of modality $b$. We implement the cross-modal mapping via the direction-specific cross-modal velocity field $v_{\Phi}^{(a \to b)}$. Each direction $a\to b$ has its own velocity field, and inference starts from the source feature $z^{(a)}$. Thus the source point enters through the initial state rather than through an additional explicit conditioning input. We define the induced one-step map as:
\[
\Phi_{a\to b}(z) := z + v_{\Phi}^{(a \to b)}(z, 0).
\]
The projected source feature and inter-modal cost are:
\begin{equation}
\label{eq:inter_cost}
\hat{z}^{(a \to b)}=\Phi_{a\to b}(z^{(a)}),
\quad
\mathcal{E}_{\text{inter}}^{(a \to b)}
=
\big\|\Phi_{a\to b}(z^{(a)})-z^{(b)}\big\|_2^2 .
\end{equation}
A large discrepancy $\mathcal{E}_{\text{inter}}^{(a \to b)}$ indicates that the source $a$ cannot be geometrically aligned with observation $b$. This high cross-modal transport cost explicitly reveals a semantic conflict between the modalities.

\textbf{Geometric Competition and Interaction.}
We synthesize these geometric costs into the final weight $w^{(m)}$ through a competition-interaction mechanism. 

\textbf{(1) Competition:} We prioritize modalities with high intrinsic stability using a Boltzmann distribution over transport energy, yielding a base score $\beta_{\text{comp}}^{(m)}$:
\begin{equation}
\label{eq:competition}
    \beta_{\text{comp}}^{(m)} = \frac{e^{ - \mathcal{E}_{\text{intra}}^{(m)} / \tau }} {\sum_{k=1}^M e^{ - \mathcal{E}_{\text{intra}}^{(k)} / \tau }}.
\end{equation}
\textbf{(2) Interaction:} We gate this score according to the geometric support that modality $m$ receives from stable neighbors. The interaction gate $\gamma_{\text{int}}^{(m)}$ aggregates cross-modal consensus from reliable modalities:
\begin{equation}
\label{eq:interaction}
    \gamma_{\text{int}}^{(m)}
    = \lambda \sum_{k \neq m} r^{(k)}
    \exp\left(-\frac{\mathcal{E}_{\text{inter}}^{(k \to m)}}{\kappa}\right),
\end{equation}
where $r^{(k)} = \sigma(\theta_r - \mathcal{E}_{\text{intra}}^{(k)})$. Here $\kappa>0$ controls the sensitivity of the interaction gate to cross-modal transport cost.
For numerical stability, we use the stabilized gate $\tilde{\gamma}_{\text{int}}^{(m)}=\gamma_{\text{int}}^{(m)}+\epsilon_\gamma$, where $\epsilon_\gamma>0$ is a small numerical floor.
The final fusion weights and global Dirichlet parameters are computed as:
\begin{equation}
\label{eq:final_weight}
    w^{(m)} = \frac{\beta_{\text{comp}}^{(m)} \tilde{\gamma}_{\text{int}}^{(m)}} {\sum_{j} \beta_{\text{comp}}^{(j)} \tilde{\gamma}_{\text{int}}^{(j)}}, \quad \boldsymbol{\alpha} = \sum_{m=1}^M w^{(m)} \mathbf{e}^{(m)} + \mathbf{1}.
\end{equation}

\subsection{Optimization via Unified Objective}
\label{sec:joint_optimization}

We train the geometric branch and the evidential decision branch in a unified framework with separated gradient paths. Unlike multi-stage approaches, this strategy keeps the geometry-derived reliability signals aligned with the decision branch while preventing each loss from updating unrelated modules. The total objective comprises a geometric transport loss, an evidential task loss, and a conflict-aware regularization term.

\textbf{Geometric Transport Loss.}
To instantiate the reliability metrics, we optimize both the intra-modal and inter-modal transport dynamics using flow matching objectives. We formulate the unified geometric objective via
\begin{equation}
\label{eq:loss_geo}
    \mathcal{L}_{\text{geo}}
    =
    \sum_{m=1}^M
    \mathcal{L}_{\text{intra}}^{(m)}
    +
    \sum_{a \neq b}
    \mathcal{L}_{\text{inter}}^{(a \to b)}.
\end{equation}
For the intra-modal term, we adopt the Rectified Flow objective defined in \eqref{eq:rf} to simplify the transport cost estimation:
\begin{equation}
\label{eq:loss_intra}
    \mathcal{L}_{\text{intra}}^{(m)} = \mathcal{L}_{\text{RF}}(\theta^{(m)}),
\end{equation}
where $\theta^{(m)}$ denotes the parameters of the intra-modal velocity network. This term regresses the velocity field toward the straight trajectory that transports a latent feature to the reference distribution. For the inter-modal term, we align geometric structures by solving a direction-specific boundary value problem between modalities. We minimize the flow matching error via
\begin{equation}
\label{eq:loss_inter}
    \mathcal{L}_{\text{inter}}^{(a \to b)}
    =
    \mathbb{E}_{t, z^{(a)}, z^{(b)}}
    \Big[
    \big\|
    v_{\Phi}^{(a \to b)}(z_t, t) - (z^{(b)} - z^{(a)})
    \big\|_2^2
    \Big],
\end{equation}
where $v_{\Phi}^{(a \to b)}$ is the modality-pair-specific cross-modal velocity network parameterized by $\Phi$. Each direction $a\to b$ has its own velocity field, and the source point enters through the initial state on the interpolation path rather than through an additional explicit conditioning input. Here we define the interpolation path $z_t = (1-t)z^{(a)} + t z^{(b)}$ sampled at time $t \sim \mathcal{U}[0, 1]$. Minimizing this objective encourages the velocity field to capture the transport correction needed to align distinct semantic manifolds.

\textbf{Evidential Task Loss.}
We optimize classification performance using the fused Dirichlet parameters $\boldsymbol{\alpha}$ derived from the geometric weights. Following the evidential learning paradigm, we minimize the expected Bayes risk via
\begin{equation}
\label{eq:loss_task}
    \mathcal{L}_{\text{task}}
    =
    \sum_{i=1}^N
    \left(
    \mathcal{L}_{\text{eCE}}(\boldsymbol{\alpha}_i, y_i)
    +
    \lambda_{\text{KL}}\,
    \mathrm{KL}\big(
    \mathrm{Dir}(\boldsymbol{\alpha}_i)
    \;\|\;
    \mathrm{Dir}(\mathbf{1})
    \big)
    \right),
\end{equation}
where $\mathcal{L}_{\text{eCE}}$ is the expected cross-entropy loss and $\mathrm{Dir}(\mathbf{1})$ represents the uniform Dirichlet distribution. This term penalizes evidential overconfidence when geometric support is absent. It forces the predictive distribution toward a neutral state if the aggregated evidence is insufficient.

\textbf{Conflict-Aware Regularization.}
We introduce a regularization term to mitigate semantic conflicts where modalities are confident but geometrically disjoint. We define a global consistency coefficient $\rho$ as a lightweight summary of inter-modal geometric deviation:
\begin{equation}
\label{eq:rho}
\rho = \frac{1}{M(M-1)} \sum_{a \neq b} \exp\left( -\frac{\mathcal{E}_{\mathrm{inter}}^{(a \to b)}}{\kappa} \right).
\end{equation}
$\mathcal{E}_{\mathrm{inter}}^{(a \to b)}$ is the pairwise endpoint residual used by the fusion gate, while $\rho$ is a lightweight global summary used only by the auxiliary regularizer. They play different roles: $\mathcal{E}_{\mathrm{inter}}$ controls sample-level cross-modal weighting, whereas $\rho$ controls the strength of uniformity regularization when global cross-modal deviation is large.
When $\rho$ approaches zero, it indicates significant manifold deviation across modalities. In this case, the regularization forces the predictive distribution toward uniformity via
\begin{equation}
\label{eq:loss_reg}
    \mathcal{L}_{\text{reg}}
    =
    (1 - \rho)^{\zeta}\,
    \mathrm{KL}\big(
    \mathrm{Dir}(\boldsymbol{\alpha})
    \;\|\;
    \mathrm{Dir}(\mathbf{1})
    \big),
\end{equation}
where $\zeta \ge 1$ is a sensitivity hyperparameter. The total training objective is the weighted sum
\begin{equation}
\label{eq:loss_total}
\mathcal{L}_{\text{total}} = \mathcal{L}_{\text{task}} + \lambda_{\text{geo}} \mathcal{L}_{\text{geo}} + \lambda_{\text{reg}} \mathcal{L}_{\text{reg}},
\end{equation}
where $\lambda_{\text{geo}}$ and $\lambda_{\text{reg}}$ control the strength of geometric constraints. In training, this objective is implemented with the separated gradient paths summarized in Algorithm~\ref{alg:train_gmf}.

\begin{algorithm}[t]
\caption{Training Procedure of GMF}
\label{alg:train_gmf}
\begin{algorithmic}[1]
\STATE \textbf{Input:} $\mathcal{D}$, $\{E^{(m)}\}_{m=1}^M$, $\mathcal{P}_{\text{prior}}$
\STATE \textbf{Init:} $\{v_\theta^{(m)}\}_{m=1}^M$, $\{v_\Phi^{(a\to b)}\}_{a\neq b}$, heads
\FOR{each mini-batch $(\{x_i^{(m)}\}_{m=1}^M,y_i)_{i=1}^B \sim \mathcal{D}$}
    \FOR{each modality $m$}
        \STATE $z_i^{(m)} \leftarrow E^{(m)}(x_i^{(m)}),\quad \bar z_i^{(m)}\leftarrow \mathrm{sg}(z_i^{(m)})$
        \STATE $t\sim\mathcal{U}[0,1],\quad z_{0,i}^{(m)} \sim \mathcal{P}_{\text{prior}},\quad z_{t,i}^{(m)}=(1-t)\bar z_i^{(m)}+t z_{0,i}^{(m)}$
        \STATE $\mathcal{L}_{\text{intra}}^{(m)}$ by Eq.~\eqref{eq:loss_intra}; $\mathcal{E}_{\text{intra}}^{(m)}$ by Eq.~\eqref{eq:intra_cost}
    \ENDFOR
    \FOR{each pair $a\neq b$}
        \STATE $t\sim\mathcal{U}[0,1],\quad z_{t,i}^{(a,b)}=(1-t)\bar z_i^{(a)}+t\bar z_i^{(b)}$
        \STATE $\mathcal{L}_{\text{inter}}^{(a\to b)}$ by Eq.~\eqref{eq:loss_inter}; $\mathcal{E}_{\text{inter}}^{(a\to b)}$ by Eq.~\eqref{eq:inter_cost}
    \ENDFOR
    \STATE $\{v_\theta,v_\Phi\}\leftarrow \mathrm{opt}_{\text{geo}}(\mathcal{L}_{\text{geo}})$
    \STATE $w^{(m)}$ by Eqs.~\eqref{eq:competition}--\eqref{eq:final_weight} with $\mathrm{sg}(\mathcal{E}_{\text{intra}}),\mathrm{sg}(\mathcal{E}_{\text{inter}})$
    \STATE $\boldsymbol{\alpha}_i$ by Eq.~\eqref{eq:final_weight}; $\mathcal{L}_{\text{task}}$ by Eq.~\eqref{eq:loss_task}
    \STATE $\rho$ by Eq.~\eqref{eq:rho}; $\mathcal{L}_{\text{reg}}(\mathrm{sg}(\rho),\boldsymbol{\alpha})$ by Eq.~\eqref{eq:loss_reg}
    \STATE $\{E,\text{heads}\}\leftarrow \mathrm{opt}_{\text{task}}(\mathcal{L}_{\text{task}}+\lambda_{\text{reg}}\mathcal{L}_{\text{reg}})$
\ENDFOR
\STATE \textbf{Output:} GMF parameters
\end{algorithmic}
\end{algorithm}

\section{Theoretical Framework}
\label{sec:theory}

We establish conditional theoretical results for the proposed framework. We first define the geometric regularity of the latent space. We then prove the optimality of our fusion mechanism. Finally, we derive safety bounds for scenarios involving semantic conflicts.

\subsection{Geometric Structure of Latent Representations}

We formalize the manifold hypothesis utilized in Sec.\ref{sec:method}, characterize the latent space through two regularity conditions.

\begin{assumption}[Latent Space Regularity]
\label{ass:regularity}
Let $z^{(m)} \sim P(\cdot | y=k)$ denote a latent representation from 
modality $m$ and class $k$. We assume there exist modality-specific class manifolds $\{\mathcal{M}_k^{(m)}\}_{k=1}^K$ such that:
\begin{enumerate}[label=(\roman*), leftmargin=2em]
    \item \textbf{Concentration}: With high probability, 
    $\mathrm{dist}(z^{(m)}, \mathcal{M}_k^{(m)}) \le \epsilon$ for some 
    $\epsilon > 0$. This reflects the local invariance of well-trained 
    features.
    \item \textbf{Metric Separability}: Distinct structures are 
    separated by a margin $\delta > 2\epsilon$: for any modality $m$ and any $i \neq j$, 
    $\inf_{u \in \mathcal{M}_i^{(m)}, v \in \mathcal{M}_j^{(m)}} \|u - v\|_2 \ge \delta$.
\end{enumerate}
\end{assumption}

\begin{remark}
Condition (i) is standard in representation learning and emerges from 
contrastive objectives \citep{CLIP}. Condition (ii) 
ensures that ``uncertainty balls" $\mathcal{B}(\mathcal{M}_k^{(m)}, \epsilon)$ 
around each class manifold are disjoint. Under this regularity, any 
confident misclassification—where $z^{(m)}$ lies near $\mathcal{M}_k^{(m)}$ 
for $k \neq y$—is at least $(\delta - \epsilon)$-away from the correct 
manifold $\mathcal{M}_y^{(m)}$, providing a detectable geometric signal 
independent of prediction confidence.
\end{remark}

\subsection{Optimal Fusion via Geometric Reliability}

We analyze the fusion weights as the solution to a constrained optimization problem. We first define the scalar objective function for each modality.

\begin{definition}[Effective Geometric Cost]
We utilize the intra-modal transport cost $\mathcal{E}_{\mathrm{intra}}^{(m)}$ from Eq.\eqref{eq:intra_cost} and the stabilized interaction gate $\tilde{\gamma}_{\mathrm{int}}^{(m)}$ from Eq.\eqref{eq:final_weight}. We define the effective geometric cost $\mathcal{C}^{(m)} \in \mathbb{R}$ as:
\begin{equation}
    \mathcal{C}^{(m)} := \mathcal{E}_{\mathrm{intra}}^{(m)} - \tau \ln \tilde{\gamma}_{\mathrm{int}}^{(m)},
\end{equation}
where $\tau > 0$ is the temperature parameter and $\tilde{\gamma}_{\mathrm{int}}^{(m)}=\gamma_{\mathrm{int}}^{(m)}+\epsilon_\gamma$.
Here $\mathcal{C}^{(m)}$ is a calibrated effective cost rather than a literal conserved physical energy. The temperature $\tau$ maps the log-gate term into the same cost scale as $\mathcal{E}_{\mathrm{intra}}^{(m)}$.
\end{definition}

This cost synthesizes intrinsic quality and extrinsic support into a single scalar. We now state the optimality theorem.

Let $\Delta^{M-1} = \{w \in \mathbb{R}^M_{\ge 0} : \sum_{m} w^{(m)} = 1\}$ denote the probability simplex, and let $H(w) = -\sum_{m} w^{(m)} \ln w^{(m)}$ be the Shannon entropy.

\begin{theorem}[Optimality of Fusion Weights]
\label{thm:optimality}
For a fixed sample and fixed geometric costs $\mathcal{C}^{(1)},\ldots,\mathcal{C}^{(M)}$, the entropy-regularized minimization problem:
\begin{equation}
    \min_{w \in \Delta^{M-1}} \;\; \sum_{m=1}^M w^{(m)} \mathcal{C}^{(m)} - \tau H(w),
\end{equation}
admits a unique minimizer $w^*$, given by the Gibbs distribution:
\begin{equation}
    w^{*(m)} = \frac{ \exp \big( -\mathcal{C}^{(m)} / \tau \big) }{ \sum_{j=1}^M \exp \big( -\mathcal{C}^{(j)} / \tau \big) }.
\end{equation}
\end{theorem}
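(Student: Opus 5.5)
The cleanest route is the Gibbs variational principle: rewrite the objective as a Kullback--Leibler divergence plus a constant. This gives existence, uniqueness and the closed form in one step. Set $Z := \sum_{j=1}^M \exp(-\mathcal{C}^{(j)}/\tau)$ and $q^{(m)} := \exp(-\mathcal{C}^{(m)}/\tau)/Z$. Since every $\mathcal{C}^{(m)}$ is a finite real number, which holds because $\tilde{\gamma}_{\mathrm{int}}^{(m)} \ge \epsilon_\gamma > 0$ makes $\ln \tilde{\gamma}_{\mathrm{int}}^{(m)}$ finite, we have $q$ in the relative interior of $\Delta^{M-1}$.

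Substitute $\mathcal{C}^{(m)} = -\tau \ln q^{(m)} - \tau \ln Z$ into the objective $F(w) := \sum_m w^{(m)}\mathcal{C}^{(m)} - \tau H(w)$. Use $\sum_m w^{(m)} = 1$ and the convention $0\ln 0 = 0$. This yields
\begin{equation*}
F(w) = \tau \sum_{m=1}^M w^{(m)} \ln \frac{w^{(m)}}{q^{(m)}} - \tau \ln Z = \tau\, \mathrm{KL}(w \,\|\, q) - \tau \ln Z .
\end{equation*}
Because $\tau>0$ and the constant $-\tau\ln Z$ does not depend on $w$, minimizing $F$ over the simplex is equivalent to minimizing $\mathrm{KL}(w\|q)$ over $\Delta^{M-1}$.

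The main step is then Gibbs' inequality: $\mathrm{KL}(w\|q)\ge 0$, with equality if and only if $w=q$. I would prove it via Jensen's inequality for the strictly concave $\ln$ on the support $S$ of $w$:
\begin{equation*}
-\mathrm{KL}(w\|q) = \sum_{m\in S} w^{(m)}\ln\frac{q^{(m)}}{w^{(m)}} \le \ln \sum_{m\in S} q^{(m)} \le 0 .
\end{equation*}
Equality in the first step requires $q^{(m)}/w^{(m)}$ to be constant on $S$. Equality in the second step requires $S=\{1,\dots,M\}$, using $q^{(m)}>0$ for all $m$. Together these force $w=q$. It follows that $w^*=q$ is a minimizer, that it is the unique one, and that the minimal value is $-\tau \ln Z$.

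The only delicate point is the boundary of the simplex, where some $w^{(m)}=0$ and a Lagrange-multiplier argument would need separate care. The KL rewriting avoids that issue, since the $0\ln 0$ convention and the support argument cover boundary points directly. As a cross-check, I would note the alternative argument. $F$ is continuous on the compact set $\Delta^{M-1}$, so a minimizer exists. $F$ is strictly convex, because $-H$ is. The stationarity condition $\mathcal{C}^{(m)} + \tau(\ln w^{(m)}+1) + \mu = 0$ reproduces the Gibbs form. Finally, $\partial F/\partial w^{(m)}\to-\infty$ as $w^{(m)}\to 0^+$, which rules out boundary minimizers.

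I would close by remarking that the resulting $w^*$ coincides with Eq.~\eqref{eq:final_weight}. Indeed, $\exp(-\mathcal{C}^{(m)}/\tau) = e^{-\mathcal{E}_{\mathrm{intra}}^{(m)}/\tau}\,\tilde{\gamma}_{\mathrm{int}}^{(m)}$, and the common factor $\sum_k e^{-\mathcal{E}_{\mathrm{intra}}^{(k)}/\tau}$ cancels between numerator and denominator. The weights GMF actually uses are therefore exactly the optimizer in the theorem.
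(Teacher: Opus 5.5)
Your proof is correct, but it takes a different route from the paper's.

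\textbf{The paper's route.} The paper argues by convex analysis. Strict convexity of $-H$ and compactness of $\Delta^{M-1}$ give existence and uniqueness. It then asserts that the entropy term keeps the optimum in the interior, so only the equality constraint gets a multiplier. Finally it solves $\mathcal{C}^{(m)}+\tau(\ln w^{(m)}+1)+\lambda=0$ and normalizes. This is exactly the argument you list as a cross-check.

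\textbf{Your route.} You rewrite the objective as $F(w)=\tau\,\mathrm{KL}(w\,\|\,q)-\tau\ln Z$ and apply Gibbs' inequality. This gives existence, uniqueness, the closed form, and the optimal value $-\tau\ln Z$ all at once. It uses no derivatives or Hessians, and it handles boundary points of the simplex through the support argument rather than through an interiority claim.

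\textbf{What each approach buys.} Your version is cleaner on two points. The paper only asserts that the non-negativity constraints are inactive; this implicitly relies on KKT sufficiency for a convex problem once a strictly positive stationary point is exhibited. In addition, the paper's Hessian $\mathrm{diag}(1/w^{(1)},\dots,1/w^{(M)})$ exists only in the interior. Your argument needs neither fact. In exchange, the paper's Lagrangian computation derives the Gibbs form instead of requiring $q$ to be known in advance.

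Your closing check is also correct: $\exp(-\mathcal{C}^{(m)}/\tau)=e^{-\mathcal{E}_{\mathrm{intra}}^{(m)}/\tau}\,\tilde{\gamma}_{\mathrm{int}}^{(m)}$, so $w^*$ coincides with Eq.~\eqref{eq:final_weight}. The paper states this link in the main text but never verifies it; you make it explicit.
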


The proof is provided in the Appendix. For fixed geometric costs, this result justifies Eq.\eqref{eq:final_weight} as the entropy-regularized minimizer over the fusion weights.

\subsection{Robustness under Semantic Conflicts}

We analyze the system behavior when modalities contradict each other. We consider a case where Modality $A$ is correct, but Modality $B$ confidently predicts a wrong class.
The following result is conditional on local cross-modal consistency. In practice, Eq.~\eqref{eq:loss_inter} learns the direction-specific map from matched multimodal pairs, and the empirical separation in Fig.~\ref{fig:expr_vis}(b) supports this behavior.

\begin{theorem}[Geometric Barrier Principle]
\label{thm:barrier}
Assume Assumption~\ref{ass:regularity} and suppose that the cross-modal map $\Phi_{n\to B}$ is $\xi$-semantically consistent with $\xi \le \epsilon$, i.e., $\mathrm{dist}(\Phi_{n\to B}(u), \mathcal{M}_y^{(B)}) \le \xi$ for any $u\in\mathcal{B}(\mathcal{M}_y^{(n)},\epsilon)$. If $z^{(n)}$ encodes class $y$ and $z^{(B)}$ encodes a conflicting class $k\neq y$, then
\begin{equation}
    \mathcal{E}_{\mathrm{inter}}^{(n \to B)} 
    = \| \Phi_{n \to B}(z^{(n)}) - z^{(B)} \|_2^2 
    \ge (\delta - 2\epsilon)^2.
\end{equation}
\end{theorem}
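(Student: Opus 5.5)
The proof is a triangle-inequality argument in the latent space of modality $B$. We interpret ``$z^{(n)}$ encodes class $y$'' through the concentration condition of Assumption~\ref{ass:regularity}(i): $\mathrm{dist}(z^{(n)},\mathcal{M}_y^{(n)})\le\epsilon$, that is, $z^{(n)}\in\mathcal{B}(\mathcal{M}_y^{(n)},\epsilon)$. Likewise, ``$z^{(B)}$ encodes the conflicting class $k$'' means $\mathrm{dist}(z^{(B)},\mathcal{M}_k^{(B)})\le\epsilon$. Under these conditions the claimed bound holds deterministically; it holds ``with high probability'' only insofar as the concentration event does.

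\textbf{Step 1: project the source feature.} Since $z^{(n)}\in\mathcal{B}(\mathcal{M}_y^{(n)},\epsilon)$, the $\xi$-semantic consistency of $\Phi_{n\to B}$ gives $\mathrm{dist}(\hat z^{(n\to B)},\mathcal{M}_y^{(B)})\le\xi\le\epsilon$, where $\hat z^{(n\to B)}=\Phi_{n\to B}(z^{(n)})$. For any $\eta>0$ we can then pick $u\in\mathcal{M}_y^{(B)}$ with $\|\hat z^{(n\to B)}-u\|_2\le\xi+\eta$.

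\textbf{Step 2: locate the observed target feature.} Using concentration for $z^{(B)}$, we pick $v\in\mathcal{M}_k^{(B)}$ with $\|z^{(B)}-v\|_2\le\epsilon+\eta$.

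\textbf{Step 3: apply separability and the triangle inequality.} Since $k\neq y$, metric separability in modality $B$ gives $\|u-v\|_2\ge\delta$. The reverse triangle inequality then yields
\[
\|\hat z^{(n\to B)}-z^{(B)}\|_2\ \ge\ \|u-v\|_2-\|\hat z^{(n\to B)}-u\|_2-\|z^{(B)}-v\|_2\ \ge\ \delta-\xi-\epsilon-2\eta .
\]
We let $\eta\to0$ and use $\xi\le\epsilon$ to get $\|\hat z^{(n\to B)}-z^{(B)}\|_2\ge\delta-2\epsilon$. Because $\delta>2\epsilon$, this lower bound is positive. Squaring both sides of an inequality between nonnegative quantities preserves it, so $\mathcal{E}_{\mathrm{inter}}^{(n\to B)}\ge(\delta-2\epsilon)^2$ by Eq.~\eqref{eq:inter_cost}.

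\textbf{Main obstacle.} The main difficulty is interpretive rather than computational. We must make precise what ``encodes a class'' means and handle infima over possibly non-closed manifolds; the $\eta$-approximation in Steps 1 and 2 takes care of the latter. We also need to note that separability is applied in modality $B$'s latent space, where both the projected point and $z^{(B)}$ live.
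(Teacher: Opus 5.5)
Your proof is correct and follows the paper's argument essentially step for step: the paper also reads the class-encoding hypotheses as $z^{(n)}\in\mathcal{B}(\mathcal{M}_y^{(n)},\epsilon)$ and $z^{(B)}\in\mathcal{B}(\mathcal{M}_k^{(B)},\epsilon)$, picks points $p_y\in\mathcal{M}_y^{(B)}$ and $p_k\in\mathcal{M}_k^{(B)}$, applies the triangle inequality together with separability in modality $B$, uses $\xi\le\epsilon$, and then squares. Your $\eta$-approximation is a small but real tightening, since the paper asserts points with $\|\tilde z-p_y\|_2\le\xi$ and $\|z^{(B)}-p_k\|_2\le\epsilon$ exactly, which tacitly assumes the infima defining the distances are attained (e.g., closed manifolds).
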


Under latent separability and local cross-modal consistency, this result shows that semantic conflicts induce a high geometric cost, exponentially suppressing the interaction gate and unnormalized contribution of the incorrect modality (Corollary~\ref{cor:suppression}). In the two-modality case, the reliable neighbor $n$ can be $A$. The formal proof is detailed in Appendix~\ref{app:theory}.

\begin{corollary}[Exponential Suppression of Conflicting Modalities]
\label{cor:suppression}
Under the conditions of Theorem~\ref{thm:barrier}, the inter-modal 
interaction gate for the conflicting modality $B$ satisfies the following bound. Let $D = (\delta - 2\epsilon)^2$. If modality $B$ conflicts with all active reliable neighbors, then
\begin{equation}
    \gamma_{\mathrm{int}}^{(B)} 
    \le \lambda (M-1) \exp\left( -\frac{D}{\kappa} \right).
\end{equation}
Consequently, the stabilized gate satisfies $\tilde{\gamma}_{\mathrm{int}}^{(B)} \le \lambda (M-1)\exp(-D/\kappa)+\epsilon_\gamma$.

\end{corollary}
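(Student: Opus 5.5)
The plan is to bound the gate $\gamma_{\mathrm{int}}^{(B)}$ from Eq.~\eqref{eq:interaction} term by term, splitting the sum over neighbors $k\neq B$ according to whether $k$ is an active reliable neighbor. The argument uses only three facts: each reliability factor satisfies $r^{(k)}=\sigma(\theta_r-\mathcal{E}_{\mathrm{intra}}^{(k)})\in(0,1)$; the exponential is monotone; and Theorem~\ref{thm:barrier} supplies a lower bound on the inter-modal cost.

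\textbf{Conflicting neighbors.} Consider any neighbor $k$ for which the hypotheses of Theorem~\ref{thm:barrier} hold with $n=k$. This means $z^{(k)}$ encodes the true class $y$, $\Phi_{k\to B}$ is $\xi$-semantically consistent with $\xi\le\epsilon$, and $z^{(B)}$ encodes a class different from $y$. The theorem then gives $\mathcal{E}_{\mathrm{inter}}^{(k\to B)}\ge(\delta-2\epsilon)^2=D$. Since $\kappa>0$ and $u\mapsto e^{-u/\kappa}$ is decreasing, we get $\exp(-\mathcal{E}_{\mathrm{inter}}^{(k\to B)}/\kappa)\le e^{-D/\kappa}$. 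Combined with $r^{(k)}<1$, the $k$-th summand is at most $e^{-D/\kappa}$.

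\textbf{Inactive neighbors.} The main obstacle is the neighbors that are not active reliable ones, for which Theorem~\ref{thm:barrier} gives no barrier. The phrase ``conflicts with all active reliable neighbors'' has to be read so that these terms are also controlled. Two readings seem natural, and I would state whichever one the proof adopts explicitly.
\begin{itemize}
\item Every $k\neq B$ is treated as a neighbor to which the barrier applies. Then all $M-1$ summands are bounded by $e^{-D/\kappa}$.
\item Inactive neighbors are those whose contribution $r^{(k)}\exp(-\mathcal{E}_{\mathrm{inter}}^{(k\to B)}/\kappa)$ is already at most $e^{-D/\kappa}$, for example because $r^{(k)}\approx 0$. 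This holds when $\mathcal{E}_{\mathrm{intra}}^{(k)}\gg\theta_r$, but it is an assumption and not automatic, so it must be stated.
\end{itemize}
With either convention, each of the $M-1$ summands is bounded by $e^{-D/\kappa}$.

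\textbf{Conclusion.} Summing the $M-1$ terms and multiplying by $\lambda$ gives $\gamma_{\mathrm{int}}^{(B)}\le\lambda(M-1)e^{-D/\kappa}$. The bound on the stabilized gate follows immediately by adding $\epsilon_\gamma$ to both sides, since $\tilde{\gamma}_{\mathrm{int}}^{(B)}=\gamma_{\mathrm{int}}^{(B)}+\epsilon_\gamma$.
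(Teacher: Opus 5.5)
Your proof is correct and follows the paper's argument. You bound each summand using Theorem~\ref{thm:barrier}, the monotonicity of $u\mapsto e^{-u/\kappa}$, and $0\le r^{(n)}\le 1$, then sum the $M-1$ terms, multiply by $\lambda$, and add $\epsilon_\gamma$. The paper's proof silently adopts your first reading by applying the barrier to every $n\neq B$ in the interaction sum, so flagging the inactive-neighbor issue is a useful clarification rather than a departure from its route.
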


Here, an active reliable neighbor means a modality included in the interaction sum whose feature satisfies the reliability and semantic-consistency assumptions of Theorem~\ref{thm:barrier}. The soft gate $r^{(n)}$ measures its reliability strength, with $0\le r^{(n)}\le 1$. Consequently, the unnormalized contribution of $B$ is exponentially suppressed up to the numerical floor, reducing its normalized weight when the denominator contains reliable non-conflicting evidence with non-vanishing contribution. Proof is provided in Appendix~\ref{app:proof_corollary}.

Corollary~\ref{cor:suppression} gives a qualitative exponential suppression trend. $(\delta - 2\epsilon)^2$ acts as a geometric barrier. Even if the conflicting modality $B$ has low intra-modal cost (i.e., overconfidence), weak cross-modal consensus drives the stabilized gate toward the numerical floor, suppressing its unnormalized contribution and increasing the rejection tendency.

\section{Experiments}
\label{sec:experiments}

We evaluate the proposed Geometry-based Multimodal Fusion (GMF) framework across four diverse benchmarks. \textbf{Sec.~\ref{subsec:setup}} details the experimental setup and implementation. \textbf{Sec.~\ref{subsec:robustness}} verifies robustness against sensor noise and offline. \textbf{Sec.~\ref{subsec:conflict}} tests safety under semantic conflicts. \textbf{Sec.~\ref{subsec:medical}} examines risk stratification in medical diagnosis. Finally, \textbf{Sec.~\ref{subsec:ablation}} provides ablation studies and efficiency analysis. Implementation are detailed in Appendix~\ref{app:implementation}.

\subsection{Experimental Setup}
\label{subsec:setup}

\textbf{Evaluation Protocols.} 
We design protocols to dissociate predictive confidence from data validity. This allows us to strictly test the ``circular dependency" hypothesis. Standard accuracy often fails to reveal when a model is ``right for the wrong reasons." Therefore, we implement three stress tests:
(1) \textbf{Sensor Corruption}: We inject Gaussian noise to trigger classifier overconfidence. We verify if transport energy identifies these manifold deviations.
(2) \textbf{Semantic Conflict}: We mismatch modalities to create contradictory inputs. This assesses the ability to prioritize safety over blind belief.
(3) \textbf{Incomplete Multimodal Data}: We randomly drop input modalities. We assess if GMF identifies null signals as high-cost outliers to reweight the remaining reliable sources.

\textbf{Datasets.} 
We employ four benchmarks covering diverse modalities:
\textbf{NYU Depth V2} (RGB-D) tests fusion across heterogeneous geometric structures~\citep{silberman2012indoor}.
\textbf{UPMC FOOD-101} (Image-Text) evaluates robustness against real-world web noise~\citep{wang2015recipe}.
\textbf{MVSA-Single} (Image-Text) provides a testbed for semantic sentiment conflicts~\citep{niu2016sentiment}.
\textbf{PneumoniaMNIST} (X-ray/Report) serves as a high-stakes medical benchmark~\citep{yang2023medmnist}.

\textbf{Baselines.}
We compare GMF against the baselines summarized in Tab.~\ref{tab:baselines}. These baselines cover both feature-level and decision-level fusion strategies to provide a thorough evaluation of reliability assessment.

\begin{table}[t!]
\centering
\small
\caption{Overview of baselines. We lists methods by venue/year, and fusion type (FL: Feature-level, DL: Decision-level).}
\label{tab:baselines}
\begin{tabular}{l|l|c}
\toprule
\textbf{Method} & \textbf{Venue / Year} & \textbf{Type} \\
\midrule
Concat & - & FL \\
Late Fusion & - & DL \\
MMTM \cite{joze2020mmtm} & CVPR 2020 & FL \\
TMC \cite{han2022trusted} & NeurIPS 2021 & DL \\
QMF \cite{zhang2023provable} & ICML 2023 & DL \\
MLA \cite{zhang2024multimodal} & CVPR 2024 & FL/Tr. \\
PDF \cite{cao2024predictive} & ICML 2024 & DL \\
DBF \cite{bezirganyan2025multimodal} & AISTATS 2025 & DL \\
UAW-EEF \cite{guo2025uncertainty} & ICASSP 2025 & DL \\
GOMFuNet \cite{guo2025gomfunet} & Mathematics 2025 & FL \\
\textbf{GMF (Ours)} & \textbf{Proposed} & \textbf{DL} \\
\bottomrule
\end{tabular}
\vskip -1em
\end{table}

\subsection{Challenge I: Robustness against Sensor Corruption}
\label{subsec:robustness}

\textbf{Hypothesis.} Statistical fusion methods rely on classifier confidence to detect errors. This creates a circular dependency because deep classifiers often assign high confidence to noisy inputs. We hypothesize that GMF breaks this dependency by using geometric transport energy. This geometry-based reliability signal should increase with noise intensity regardless of the internal state of the classifier.

\textbf{Protocol.} We simulate failure by injecting Gaussian noise into the image modality of NYU Depth V2 and UPMC Food-101 datasets. We use standard deviations $\sigma \in {1.0, 2.0}$. We evaluate classification accuracy and the ability of the model to distinguish reliable data from corrupted samples.

\begin{table}[t]
\centering
\caption{Classification accuracy (\%) under diverse low-quality scenarios. We report performance on clean data, two levels of Gaussian noise (added to images), and incomplete data input (Inco, 50\% text/depth features masked) cases. GMF exhibits superior robustness across all corruptions by identifying high transport energy for both noisy and incomplete inputs. The best in each column is in \textbf{bold}. Second-best results are \underline{underlined}.}
\label{tab:noise_robustness}
\resizebox{\linewidth}{!}{
\begin{tabular}{l|c|ccc|c|ccc}
\toprule
\multirow{3}{*}{Method} & \multicolumn{4}{c|}{NYU Depth V2 (RGB-D)} & \multicolumn{4}{c}{UPMC Food-101 (Img-Text)} \\
\cline{2-9}
 & \multirow{2}{*}{Clean} & \multicolumn{2}{c}{Noise ($\sigma$)} & \multirow{2}{*}{Inco} & \multirow{2}{*}{Clean} & \multicolumn{2}{c}{Noise ($\sigma$)} & \multirow{2}{*}{Inco} \\
 & & 1.0 & 2.0 & & & 1.0 & 2.0 & \\
\midrule
Concat & 68.5 & 42.1 & 28.4 & 35.8 & 89.4 & 45.3 & 30.2 & 41.2 \\
Late Fusion & 69.1 & 51.9 & 38.6 & 48.2 & 90.7 & 58.0 & 41.5 & 62.1 \\
MMTM \cite{joze2020mmtm} & 70.2 & 53.8 & 40.5 & 50.8 & 91.2 & 59.4 & 43.1 & 64.8 \\
TMC \cite{han2022trusted} & 71.0 & 53.3 & 41.2 & 52.1 & 91.5 & 56.7 & 39.5 & 68.2 \\
QMF \cite{zhang2023provable} & 71.2 & 55.6 & 45.8 & 56.4 & 92.9 & 62.2 & 48.6 & 78.5 \\
MLA \cite{zhang2024multimodal} & 72.1 & 50.8 & 35.4 & 42.1 & \textbf{93.6} & 55.4 & 38.9 & 64.2 \\
PDF \cite{cao2024predictive} & \textbf{72.5} & 57.1 & 47.5 & 58.2 & \underline{93.4} & 64.5 & 51.2 & 80.6 \\
DBF \cite{bezirganyan2025multimodal} & \underline{72.3} & 58.0 & 49.1 & 60.3 & 93.2 & 65.1 & 52.4 & 81.3 \\
UAW-EEF \cite{guo2025uncertainty} & 71.8 & \underline{58.4} & \underline{50.2} & \underline{61.5} & 92.8 & \underline{66.4} & \underline{53.1} & \underline{82.4} \\
GOMFuNet \cite{guo2025gomfunet} & 70.5 & 54.1 & 42.6 & 51.9 & 91.6 & 60.2 & 44.5 & 66.1 \\
\textbf{GMF (Ours)} & 71.9 & \textbf{61.4} & \textbf{55.2} & \textbf{64.8} & 93.1 & \textbf{69.8} & \textbf{58.7} & \textbf{85.4} \\
\bottomrule
\end{tabular}
}
\end{table}

Tab.~\ref{tab:noise_robustness} shows that GMF remains competitive on clean data while exhibiting stronger resilience as noise increases. The advantage is most visible in the severe corruption setting, where confidence-based baselines degrade more sharply. Error bars are reported in Appendix~\ref{app:bar}.

GMF also generalizes to incomplete data scenarios. When a modality is dropped, the input deviates from the clean latent data distribution ($P_{\text{dst}}$), causing the velocity network $v_\theta$ to output vectors with high squared velocity magnitude. The observed increase in transport energy directly signals structural failure, which helps GMF down-weight incomplete or corrupted inputs.

\textbf{Decoupling Reliability from Confidence.} 
We analyze the reliability diagrams in Figure~\ref{fig:expr_vis}(a) to verify the resolution of the circular dependency. Statistical baselines exhibit a clear calibration gap under severe noise: their predicted confidence remains high even as accuracy drops. This confirms that internal uncertainty estimates collapse under corruption. In contrast, the GMF curve tracks the ideal diagonal more closely. The reliability diagram evaluates the calibration of the final fused prediction rather than the transport energy alone, and the smaller calibration gap indicates that geometry-guided weighting helps the evidential classifier avoid overconfident predictions under severe corruption.

\begin{figure*}
    \centering
    \includegraphics[width=1\linewidth]{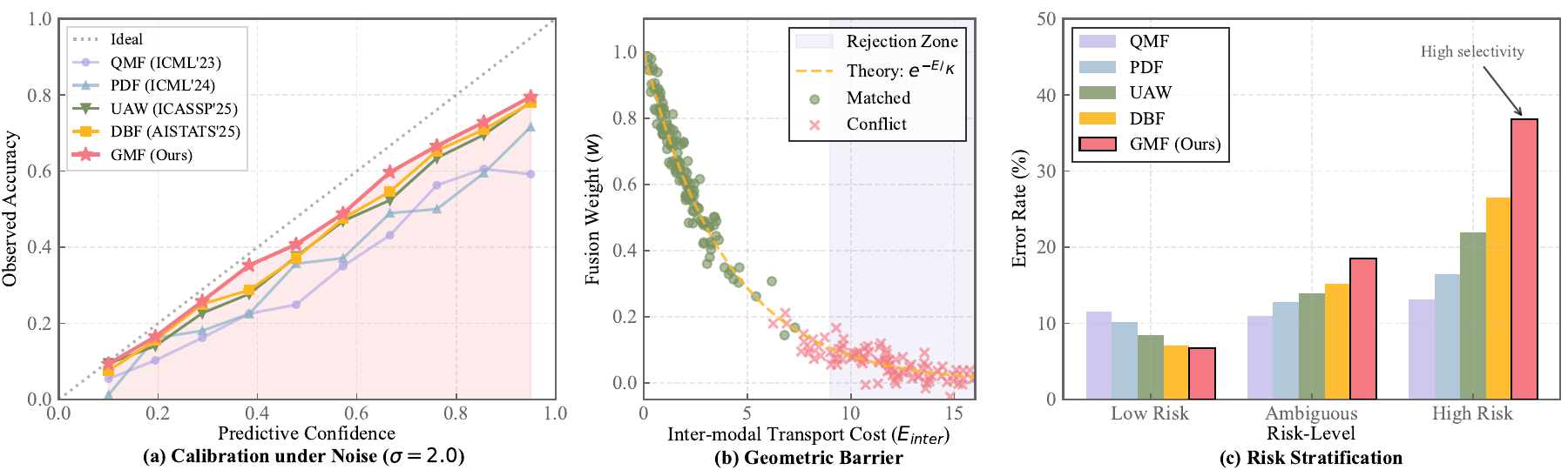}
    \caption{Quantitative analysis of the GMF framework. Left: Reliability diagrams show that GMF maintains superior calibration under noise, while baselines exhibit significant overconfidence. Middle: The exponential trend of fusion weights relative to inter-modal cost validates the Geometric Barrier Principle for conflict detection, up to a constant scale. Right: GMF effectively concentrates model errors within high-energy samples, enabling reliable risk stratification for safety-critical tasks.}
    \label{fig:expr_vis}
    \vskip -0.5em
\end{figure*}

\subsection{Challenge II: Safety under Semantic Conflict}
\label{subsec:conflict}

Safety-critical systems must distinguish between valid data and semantic conflicts. We validate the ability of GMF to break the circular dependency using a Confident Conflict protocol. This tests if the model can detect when confident modalities contradict each other.

\textbf{Conflict Protocol.} 
We construct a Conflict Subset on MVSA-Single by shuffling image-text pairs. This forces a semantic mismatch while preserving the quality of modalities. A trustworthy model should output high predictive entropy for these samples. We define the Safe Rejection Rate as the percentage of conflict samples where the predictive entropy of the model exceeds a threshold.

\begin{table}[t]
\centering
\caption{Safety evaluation under Semantic Conflict on MVSA-Single. We compare the ability to correctly reject mismatched pairs. The table reports Safe Rejection Rate, Average Entropy on conflicts, and Conflict Detection Rate (CDR) via AUROC.}
\label{tab:conflict}
\resizebox{\columnwidth}{!}{
\begin{tabular}{l|c|c|c}
\toprule
\textbf{Method} & \textbf{Rejection Rate} $\uparrow$ & \textbf{Avg Entropy} $\uparrow$ & \textbf{CDR (AUROC)} $\uparrow$ \\
\midrule
TMC \cite{han2022trusted} & 15.2\% & 0.45 & 54.3 \\
QMF \cite{zhang2023provable} & 18.5\% & 0.52 & 56.8 \\
PDF \cite{cao2024predictive} & 21.3\% & 0.58 & 60.1 \\
UAW-EEF \cite{guo2025uncertainty} & 26.8\% & 0.72 & 64.5 \\
DBF \cite{bezirganyan2025multimodal} & 35.2\% & 0.94 & 71.2 \\
\midrule
\textbf{GMF (Ours)} & \textbf{76.8\%} & \textbf{1.85} & \textbf{89.4} \\
\bottomrule
\end{tabular}
}
\end{table}

\begin{table}[t]
\centering
\caption{Reliability assessment on PneumoniaMNIST. We report the Pearson correlation ($r$) and the Expected Calibration Error (ECE). GMF provides the most truthful reliability estimates by grounding diagnostic ambiguity in latent transport geometry.}
\label{tab:medical}
\resizebox{\columnwidth}{!}{
\begin{tabular}{l|c|cc}
\toprule
\textbf{Method} & \textbf{Acc (\%)} & \textbf{Correlation ($r$)} & \textbf{ECE ($\downarrow$)} \\
\midrule
QMF \cite{zhang2023provable} & 87.2 & 0.42 & 0.154 \\
PDF \cite{cao2024predictive} & 88.5 & 0.55 & 0.112 \\
UAW-EEF \cite{guo2025uncertainty} & 88.8 & 0.58 & 0.105 \\
DBF \cite{bezirganyan2025multimodal} & 89.1 & 0.61 & 0.095 \\
\midrule
\textbf{GMF (Ours)} & \textbf{91.2} & \textbf{0.78} & \textbf{0.068} \\
\bottomrule
\end{tabular}
}
\end{table}
Tab.~\ref{tab:conflict} highlights the gap between statistical introspection and geometric validation. Evidential methods like QMF and PDF fail to reject conflicts reliably. They simply average two confident but contradictory distributions. This results in blind confidence despite the semantic mismatch. Recent approaches like UAW-EEF and DBF explicitly model disagreement logic. However, they still operate within the prediction space. These methods rely on belief masses derived from the classifier outputs. Consequently, they lack an external signal to contradict the internal certainty of the classifier when both inputs are OOD.

\textbf{Geometric Barrier Verification.} 
GMF achieves a superior rejection rate because it assesses reliability via manifold geometry. Figure~\ref{fig:expr_vis}(b) provides direct confirmation of the Geometric Barrier Principle (Theorem~\ref{thm:barrier}). We observe a clean separation between the clusters:
\begin{itemize}[noitemsep,topsep=0pt]
    \item \textbf{Safe Zone ($E_{\text{inter}} < 5$):} Matched pairs (green dots) concentrate here, retaining high fusion weights.
    \item \textbf{Rejection Zone ($E_{\text{inter}} > 9$):} Conflict pairs (red crosses) are pushed into this high-cost region.
\end{itemize}
Crucially, the fusion weights follow an exponential trend with respect to $E_{\text{inter}}$, up to normalization and constant factors. Equivalently, the gate follows the trend $e^{-E_{\text{inter}}/\kappa}$ before fusion normalization. The strong monotonic relationship between inter-modal cost and fusion weight confirms that the geometric barrier acts as a cross-modal reliability gate, exponentially suppressing modalities that violate semantic consistency.

\subsection{Challenge III: Safety under Diagnostic Ambiguity}
\label{subsec:medical}

\textbf{Hypothesis.} Medical diagnosis presents a distinct challenge. Diagnostic ambiguity arises from genuine edge cases rather than synthetic noise. A pneumonia scan with early infiltrates may trigger high classifier confidence due to spurious correlations. We hypothesize that statistical methods fail to distinguish these confident errors from easy samples. Transport energy will reflect this ambiguity as a high transport correction in latent space. This allows the system to identify samples that lie near decision boundaries despite high predictive confidence.

We evaluate reliability alignment on PneumoniaMNIST using Pearson correlation between the reliability score and predictive correctness. The results in Tab.~\ref{tab:medical} show that GMF significantly outperforms PDF and DBF. This gap indicates that transport energy identifies geometrically marginal cases that statistical calibration misses. Standard methods rely on output entropy. Classifiers often overfit to ambiguous biological features and produce peaked distributions for hard samples. GMF avoids this circular dependency by measuring the learned transport correction in latent space. This transport-energy estimate naturally spikes for atypical presentations.

We validate the clinical utility through risk stratification shown in Figure~\ref{fig:expr_vis}(c). We partition the test set into low, ambiguous, and high-risk tiers based on transport energy quantiles. An ideal system should concentrate all errors in the high-risk tier.
Figure~\ref{fig:expr_vis}(c) reveals that statistical baselines (e.g., QMF, PDF) produce a relatively flat error profile, failing to discriminate truly hard samples. In contrast, GMF achieves high selectivity:
\begin{itemize}[noitemsep,topsep=0pt]
    \item \textbf{Low Risk Tier:} GMF assigns easier samples to this group, supporting safer automation.
    \item \textbf{High Risk Tier:} GMF concentrates difficult samples and most model errors in this group.
\end{itemize}
This separation confirms that our geometric auditing acts as an effective safety valve. Clinically, this enables a triage protocol: automatically approve low-cost scans while deferring the high-cost subset to radiologists, capturing most potential AI errors.

\subsection{Ablation Study and Efficiency Analysis}
\label{subsec:ablation}

We isolate the contributions of the GMF through component replacement. This analysis examines the independence of reliability metrics and computational efficiency, then verify the rectified-transport assumption of trajectory linearity.

\textbf{Geometric vs Statistical Reliability.}
We evaluate the independence of our reliability metric to address the circular dependency. We replace intra-modal transport energy with standard predictive entropy in Tab.~\ref{tab:ablation_main}. This change increases the mutual information between reliability scores and predictive confidence. 

We estimate mutual information using the Kraskov-Stögbauer-Grassberger estimator. The low mutual information for GMF indicates that transport energy is decoupled from classifier outputs. This allows the system to detect noise even when the classifier is overconfident. We also replace flow-based inter-modal cost with linear cosine similarity. The resulting drop in conflict detection confirms that semantic conflicts manifest as non-linear geometric distortions.

\textbf{Efficiency and Trajectory Linearity.}
Our framework utilizes Rectified Flow to linearize transport paths. We compare our one-step velocity prediction against multi-step ODE integration in Tab.~\ref{tab:efficiency}. The negligible difference in accuracy supports using squared initial velocity as the transport-energy estimate. The measured path deviation is small, confirming that learned trajectories are effectively straight. GMF avoids the heavy computational burden of iterative generative models, while maintaining latency parity with lightweight baselines such as PDF and DBF. This ensures the framework is suitable for real-time deployment.

\textbf{Prior Distribution Sensitivity.}
GMF uses a class-agnostic prior as a reference frame for rectified transport. To test sensitivity to this design, we retrained GMF with three priors while keeping all other settings fixed: isotropic Gaussian $\mathcal{N}(0,I)$, anisotropic Gaussian $\mathcal{N}(0,\Sigma)$ using empirical global feature statistics, and a Laplace distribution.

The downstream accuracy and sample-level energy rankings are stable across different priors. Therefore, GMF does not rely on a carefully tuned prior; the main effect comes from the learned transport structure.

\begin{table}[t!]
\centering
\caption{Component replacement study. Gap denotes the performance improvement of GMF over statistical baselines. Low mutual information (MI) confirms that transport energy is decoupled from confidence.}
\label{tab:ablation_main}
\resizebox{\columnwidth}{!}{
\begin{tabular}{l|c|c|c}
\toprule
Component and Metric & Statistical Baseline & Geometric (Ours) & Gap \\
\midrule
1. Reliability Metric & Predictive Entropy & Transport Energy & \\
Accuracy ($\sigma=2.0$) & 36.8\% & 55.2\% & +18.4\% \\
MI with Confidence & 0.67 (Coupled) & 0.08 (Decoupled) & -0.59 \\
\midrule
2. Conflict Metric & Cosine Similarity & Cross-Modal Flow & \\
Conflict AUROC & 72.4 & 89.4 & +17.0 \\
\midrule
3. Fusion Logic & Naive Averaging & Energy Competition & \\
Safe Rejection & 64.1\% & 76.8\% & +12.7\% \\
\bottomrule
\end{tabular}
}
\end{table}

\begin{table}[t!]
\centering
\caption{Sensitivity of GMF to the reference prior distribution.}
\label{tab:prior_accuracy}
\resizebox{\columnwidth}{!}{
\begin{tabular}{lcccc}
\toprule
Prior & NYU Clean & NYU Noise $\sigma=2.0$ & Food-101 Clean & Food-101 Noise $\sigma=2.0$ \\
\midrule
$\mathcal{N}(0,I)$ & 71.9 & 55.2 & 93.1 & 58.7 \\
$\mathcal{N}(0,\Sigma)$ & 71.7 & 54.9 & 92.8 & 58.4 \\
Laplace & 71.6 & 54.8 & 92.9 & 58.3 \\
\bottomrule
\end{tabular}
}
\end{table}

\begin{table}[t!]
\centering
\caption{Sample-level energy ranking consistency across priors.}
\label{tab:prior_ranking}
\resizebox{\columnwidth}{!}{
\begin{tabular}{lcc}
\toprule
Prior & Spearman w.r.t. $\mathcal{N}(0,I)$ & Kendall w.r.t. $\mathcal{N}(0,I)$ \\
\midrule
$\mathcal{N}(0,\Sigma)$ & 0.97 & 0.94 \\
Laplace & 0.96 & 0.93 \\
\bottomrule
\end{tabular}
}
\end{table}

\begin{table}[t!]
\centering
\caption{Efficiency and linearity verification. We compare inference latency against recent state-of-the-art methods. The comparison between GMF integration steps validates the straight-path assumption.}
\label{tab:efficiency}
\resizebox{\columnwidth}{!}{
\begin{tabular}{l|l|c|c}
\toprule
Method & Type & Latency & Acc (Clean) \\
\midrule
MLA \cite{zhang2024multimodal} & Optimization & 156ms & 72.1\% \\
PDF \cite{cao2024predictive} & Evidential & 14ms & 72.5\% \\
DBF \cite{bezirganyan2025multimodal} & Belief Function & 16ms & 72.3\% \\
UAW-EEF \cite{guo2025uncertainty} & Evidential & 15ms & 71.8\% \\
\midrule
GMF (16-Step Integ.) & ODE Solver & 142ms & 72.2\% \\
GMF (1-Step Pred.) & Velocity Proxy & 18ms & 71.9\% \\
\bottomrule
\end{tabular}
}
\end{table}

\section{Conclusion}
\label{sec:conclusion}
Trustworthy multimodal fusion must move beyond self-referential statistical metrics to ensure safe decision-making. We demonstrate that reliability is more effectively assessed as an extrinsic geometric property rather than an internal prediction outcome. This shift to transport dynamics allows systems to identify corrupted modalities that deceive standard uncertainty quantification. Our findings suggest that manifold geometry provides a robust foundation for assessing semantic consistency. We provide a detailed discussion of the computational limitations and future directions of this geometric approach in the Appendix~\ref{app:limitations}.

\section*{Acknowledgments}

This work is supported by the National Science and Technology Major Project of China (No. 2025ZD0219200), the National Natural Science Foundation of China (No. 62406225), and partly supported by Shanghai Science and Technology Committee under Grant No. 24511103900.

\section*{Impact Statement}

This paper presents work whose goal is to advance the field of Machine
Learning. There are many potential societal consequences of our work, none which we feel must be specifically highlighted here.


\bibliography{example_paper}
\bibliographystyle{icml2026}

\appendix
\onecolumn
\section{Limitations}
\label{app:limitations}

Our framework operates strictly on the latent representations produced by upstream encoders~\citep{brahma2015deep, monti2017geometric, lee2018simple}. We do not reinitialize the feature extraction process itself. Consequently, the validity of our geometric reliability relies on Assumption~\ref{ass:regularity}. We assume that the feature extractors map semantically distinct inputs to metric-separable manifolds. If the encoder suffers from mode collapse or fails to disentangle class features, the geometric transport costs may become uninformative. In such cases, GMF outputs a uniform distribution due to the lack of distinct geometric guidance. While this behavior prevents overconfident errors, our method cannot recover semantic information that is lost during the initial feature extraction stage.

Unlike some statistical fusion approaches that are parameter-free, GMF introduces a lightweight velocity network for transport cost estimation. This adds a minor computational step during inference. However, this overhead is negligible in practice. Modern GPU architectures process the small velocity network (1 step) in parallel with high efficiency. The total inference time remains dominated by the CPU-GPU latency. Therefore, the relative latency increase from our geometric auditing step is insignificant for real-time applications.

\section{Implementation Details}
\label{app:implementation}

\medskip
\noindent
\textbf{Remark on the DSB--RF connection.}
GMF instantiates the Diffusion Schr\"{o}dinger Bridge (DSB) via a single-shot Rectified Flow (RF) training procedure without iterative proportional fitting (IPF). This design is supported by the theoretical finding of~\citet{liu2023flow} that a single flow-matching training already produces a valid SB approximation, with reflow serving only to further straighten the trajectory. The approach is also consistent with the diffusion Schr\"{o}dinger bridge matching framework of~\citet{shi2023diffusion}, which similarly adopts single-pass matching to bypass IPF iteration. In our setting, this single-shot strategy preserves the geometric reliability signal while maintaining the low inference latency required for real-time multimodal deployment.

\textbf{Extractor.} Following the experimental protocols established in QMF~\cite{zhang2023provable}, we utilize pre-trained encoders to extract latent representations for each modality. Specifically, we employ ResNet-50/152 for visual modalities and BERT/RoBERTa for textual modalities, depending on the dataset. The extracted features are mapped to a unified latent dimension via a single linear layer before fusion.

\textbf{Velocity Networks.}
To estimate transport costs efficiently, we parameterize the vector fields with lightweight architectures. 
Both intra-modal and inter-modal flows perform diffusion in a shared latent space $\mathbb{R}^{d}$. 
For any modality $m$, we first project its latent feature $z^{(m)} \in \mathbb{R}^{d_m}$ into $\mathbb{R}^{d}$ using a linear projection
$W_{\text{proj}}^{(m)} \in \mathbb{R}^{d_m \times d}$. 
The velocity network then operates in $\mathbb{R}^{d}$ with a two-layer MLP:
$\text{Linear} \to \text{SiLU} \to \text{Linear}$, producing a velocity (noise direction) in $\mathbb{R}^{d}$.

For cross-modal transport $v_{\Phi}^{(a \to b)}$, we follow the same rule. 
We project the source feature $z^{(a)} \in \mathbb{R}^{d_a}$ into the shared space $\mathbb{R}^{d}$ via
$W_{\text{proj}}^{(a)} \in \mathbb{R}^{d_a \times d}$, and apply the same two-layer MLP in $\mathbb{R}^{d}$.
This removes the need to match $d_a$ and $d_b$ inside the velocity network, since diffusion always happens in $\mathbb{R}^{d}$.

This unified design keeps the parameter footprint small. 
As summarized in Tab.~\ref{tab:network_structure}, each velocity network uses only three weight matrices:
one projection matrix and two MLP matrices. 
We avoid heavy attention blocks and deep recurrent backbones, so the transport-cost auditing adds negligible overhead.

\begin{table}[h]
\centering
\caption{Structural specifications of the proposed velocity networks. Both intra-modal and inter-modal flows first project features into a shared diffusion space $\mathbb{R}^{d}$, then apply a lightweight two-layer MLP. ($d$: shared diffusion dim, $d_m$: intra-modal dim, $d_a$: source dim, $\sigma$: SiLU). ()$^*$ denotes the cyclic structure of the diffusion.}
\label{tab:network_structure}
\resizebox{\linewidth}{!}{%
\begin{tabular}{lcccc}
\toprule
\textbf{Module} & \textbf{Mapping} & \textbf{Architecture Flow} & \textbf{Weight Matrices} & \textbf{Dimensions} \\ 
\midrule
Intra-Modal ($v_\theta^{(m)}$) 
& $z^{(m)} \to \text{Noise in } \mathbb{R}^{d}$ 
& $\text{Proj}^{(m)} \to (\text{Linear} \xrightarrow{\sigma} \text{Linear})^*$ 
& 3 
& $W_{\text{proj}}^{(m)} \in \mathbb{R}^{d_m \times d};\ W_{1,2} \in \mathbb{R}^{d \times d}$ \\ 

Inter-Modal ($v_\Phi^{(a \to b)}$) 
& $z^{(a)} \to z^{(b)}$ 
& $\text{Proj}^{(a)} \to (\text{Linear} \xrightarrow{\sigma} \text{Linear})^*$ 
& 3 
& $W_{\text{proj}}^{(a)} \in \mathbb{R}^{d_a \times d};\ W_{1,2} \in \mathbb{R}^{d \times d}$ \\ 
\bottomrule
\end{tabular}%
}
\end{table}

This unified design keeps the parameter footprint small. Taking Food-101 as an example (Input: 2048d/768d, Shared: 512d), the total parameter count is only $\approx$ 4.98M. The initialization projection accounts for $\approx$ 2.88M parameters, while the four velocity bridges (two intra, two inter) add $\approx$ 2.10M parameters.
Computationally, the unified dimension mapping cost is negligible: $\approx$ 0.01 GFLOPs per step. Even with 16-step integration during training, the total geometric overhead is merely $\approx$ 0.16 GFLOPs, and for single-step inference, it remains $\approx$ 0.01 GFLOPs.
We avoid heavy attention blocks and deep recurrent backbones, so the transport-cost auditing adds negligible overhead.

Crucially, feature extractors are pre-trained when constructing the latent representations used by GMF. The geometric loss $\mathcal{L}_{\mathrm{geo}}$ updates only the geometric branch, namely the intra-modal and cross-modal velocity fields; it does not update the classifier/evidential heads or the feature extractors. Conversely, the classification loss $\mathcal{L}_{\mathrm{task}}$ updates only the evidential decision branch and does not update the geometric branch. The conflict-aware regularizer uses geometry-derived quantities as detached reliability signals for the decision branch unless explicitly stated otherwise. Thus, GMF is trained as a unified framework with separated gradient paths, rather than as a fully entangled end-to-end system in which every loss backpropagates through all modules.

\textbf{Hyperparameters and Optimization.} We use the Adam optimizer for all experiments. The initial learning rate is set to $5 \times 10^{-5}$ for the fusion classifier and the velocity networks. We employ a cosine annealing scheduler (at first 20\% epochs) for learning rate decay. We apply a weight decay of $1 \times 10^{-4}$ and stochastic depth of $0.4$ to prevent overfitting. The key hyperparameters unique to our GMF framework are summarized in Tab.~\ref{tab:hyperparameters}.

\begin{table}[ht]
\centering
\caption{Hyperparameter settings for the proposed GMF framework.}
\label{tab:hyperparameters}
\begin{tabular}{lccccccc}
\toprule
\textbf{Parameter} & $\tau$ & $\kappa$ & $\theta_r$ & $\lambda$ & $\zeta $ & $\lambda_{geo}$ & $\lambda_{reg}$ \\ \midrule
\textbf{Value}     & 1.0    & 1.0      & 0.5    & 0.1       &  2 & 1.0             & 0.5             \\ \bottomrule
\end{tabular}
\end{table}

\textbf{Training Setup}
All models are implemented in PyTorch and trained on NVIDIA RTX 4090 GPU. For the Rectified Flow matching, we sample the time step $t \sim \mathcal{U}[0, 1]$ uniformly. The batch size is set to 64 for NYU Depth V2 and 128 for other datasets. We train the models for 20 epochs.

\textbf{Codes.} Given the availability of robust and comprehensive implementations for certain baselines (e.g., QMF~\citep{zhang2023provable}), we have refrained from redundant reimplementation. We provide only the code developed specifically for our reproduction, and direct readers to the official repositories associated with the cited works for the remaining components. This approach serves to honor the original contributors' efforts while strictly adhering to ICML anonymity protocols, which preclude the inclusion of potentially deanonymizing links in the supplementary material.

\subsection{Efficiency and Scaling Analysis}

Since each modality is projected into a fixed-dimensional latent space, the dependence on original feature dimensionality is mainly linear through the projection layers. The inter-modal part scales quadratically with the number of modalities because pairwise cross-modal transport blocks are computed. These blocks are independent and can be computed in parallel. In practice, the absolute overhead remains modest.

\begin{table}[h]
\centering
\caption{Scaling of GMF transport blocks with the number of modalities.}
\label{tab:scaling_modalities}
\small
\begin{tabular}{cccccc}
\toprule
Modalities & Intra Blocks & Inter Blocks & Params & FLOPs & Mean Time (ms) \\
\midrule
3 & 3 & 6 & 9,714,688 & 2,483,159,040 & 1.3275 \\
4 & 4 & 12 & 17,065,472 & 4,362,207,232 & 2.2043 \\
5 & 5 & 20 & 26,516,480 & 6,778,126,336 & 3.3422 \\
\bottomrule
\end{tabular}
\end{table}

\begin{table}[h]
\centering
\caption{Full-pipeline latency and memory on RTX 4090 with ResNet-152 and BERT-base feature extractors kept on GPU.}
\label{tab:pipeline_latency}
\small
\begin{tabular}{lccccc}
\toprule
Pipeline & Batch & Steps & Mean (ms) & Peak Alloc Mean (MiB) & Peak Reserv Mean (MiB) \\
\midrule
GMF & 1 & 1 & 14.7566 & 802.44 & 854.00 \\
GMF & 1 & 16 & 24.2284 & 802.44 & 854.00 \\
MLA & 1 & 1 & 16.9957 & 809.44 & 984.00 \\
QMF & 1 & 1 & 13.6565 & 819.46 & 880.00 \\
GMF & 256 & 1 & 355.0958 & 3382.42 & 4802.00 \\
GMF & 256 & 16 & 360.3813 & 3382.42 & 4804.00 \\
MLA & 256 & 1 & 356.1937 & 3382.42 & 4804.00 \\
QMF & 256 & 1 & 353.4991 & 3382.42 & 4804.00 \\
\bottomrule
\end{tabular}
\end{table}

The 1-step GMF pipeline is comparable to QMF and faster than MLA at batch size 1. At batch size 256, the difference among GMF, MLA, and QMF is small because the feature extractor dominates the runtime. The 16-step GMF variant only adds marginal overhead at large batch size in this setup, supporting the practicality of the rectified one-step design.

\begin{figure*}[h]
    \centering
    \includegraphics[width=0.8\linewidth]{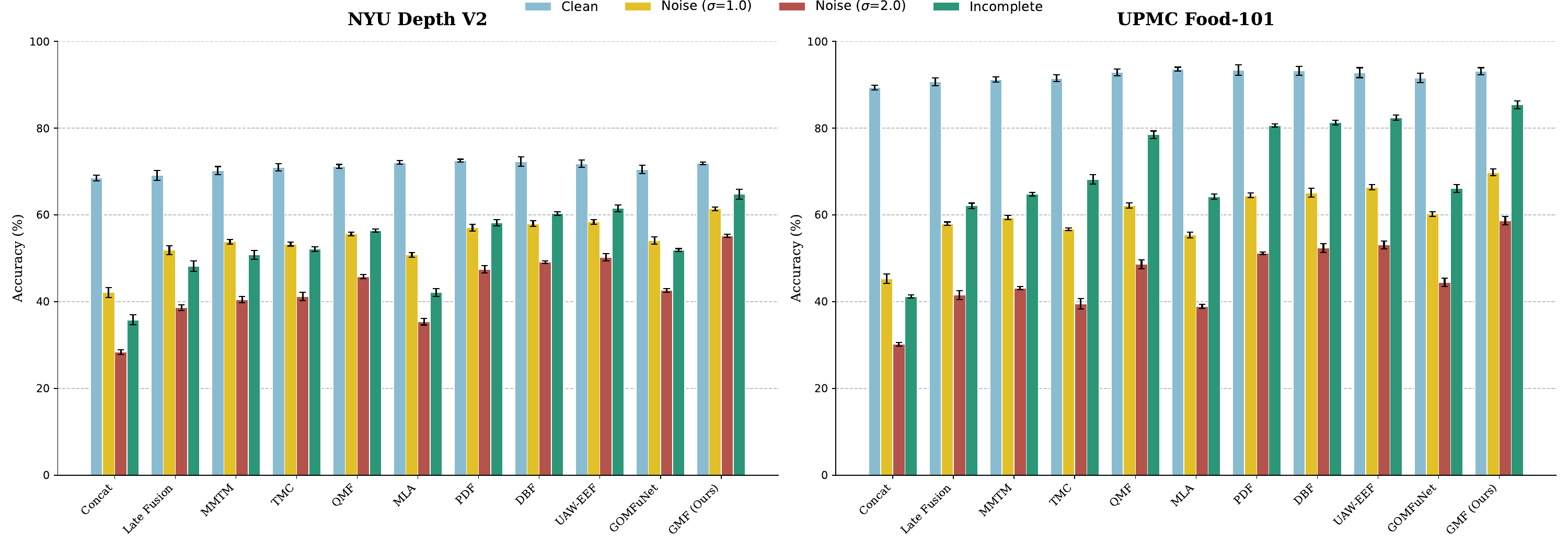}
    \caption{\textbf{Statistical Significance and Robustness Analysis.} 
    Comparative classification accuracy (\%) on NYU Depth V2 and UPMC Food-101 datasets under clean, noisy ($\sigma \in \{1.0, 2.0\}$), and incomplete modalities. 
    Standard deviations are reported via error bars across 3 independent runs. 
    \textbf{GMF (Ours)} consistently achieves superior performance with low variance, particularly in high-noise regimes (red bars), validating the stability of the proposed geometric transport metric.}
    \label{fig:robustness_std}
\end{figure*}

\section{Error Bar}
\label{app:bar}
To further validate the statistical significance of our results, we report the mean accuracy and standard deviation across three independent runs (Seeds: 0, 12, 123) in Figure \ref{fig:robustness_std}. As illustrated, GMF demonstrates superior stability compared to leading baselines like PDF and UAW-EEF. Notably, under severe corruption ($\sigma=2.0$), GMF not only maintains the highest accuracy but also exhibits narrow error margins. This confirms that the performance gains are statistically significant and attributable to the geometric reliability metric, rather than random initialization variance.

\section{Theoretical Proofs and Derivations}
\label{app:theory}

This appendix provides complete proofs for all theoretical results in 
Section~\ref{sec:theory}.

\subsection{Preliminaries and Notation}

Let the latent space be the Euclidean space $\mathbb{R}^d$ equipped with 
the $\ell_2$ norm $\|\cdot\|_2$. 

\textbf{Notation for distances and neighborhoods.}
For a set $\mathcal{S} \subset \mathbb{R}^d$ and radius $r > 0$, we define:
\begin{align}
    \mathrm{dist}(x, \mathcal{S}) 
    &:= \inf_{s \in \mathcal{S}} \|x - s\|_2, 
    \label{eq:def_dist} \\
    \mathcal{B}(\mathcal{S}, r) 
    &:= \{x \in \mathbb{R}^d : \mathrm{dist}(x, \mathcal{S}) \le r\}.
    \label{eq:def_ball}
\end{align}
Here, $\mathrm{dist}(x, \mathcal{S})$ is the distance from point $x$ to 
the set $\mathcal{S}$, and $\mathcal{B}(\mathcal{S}, r)$ is the 
\emph{$r$-neighborhood} of $\mathcal{S}$, also known as the \emph{Minkowski sum} 
$\mathcal{S} \oplus \mathcal{B}(0, r)$.

\textbf{Restatement of Assumption~\ref{ass:regularity}.}
For completeness, we restate the latent space regularity assumption from 
the main text.

\begin{assumption}[Latent Space Regularity]
Let $z^{(m)} \sim P(\cdot \mid y=k)$ denote a latent representation from 
modality $m$ conditioned on class $k \in \{1, \dots, K\}$. For each modality, there exist 
class-specific geometric structures $\{\mathcal{M}_k^{(m)}\}_{k=1}^K 
\subset \mathbb{R}^d$ satisfying the following two properties:
\begin{enumerate}[label=(\roman*), leftmargin=2em]
    \item \textbf{Concentration.} With high probability (at least $1-\beta$ 
    for small $\beta > 0$), samples lie within an $\epsilon$-neighborhood 
    of their class structure:
    \[
    P\big( z^{(m)} \in \mathcal{B}(\mathcal{M}_k^{(m)}, \epsilon) \big) \ge 1 - \beta.
    \]
    
    \item \textbf{Metric Separability.} Distinct class structures are 
    separated by a margin $\delta > 2\epsilon$:
    \[
    \forall m,\ \forall i \neq j, \quad 
    \inf_{u \in \mathcal{M}_i^{(m)},\, v \in \mathcal{M}_j^{(m)}} \|u - v\|_2 \ge \delta.
    \]
\end{enumerate}
\end{assumption}

\noindent
\textbf{Interpretation.}
Condition (i) states that well-trained encoders produce representations 
that concentrate near class-specific geometric structures. The parameter 
$\epsilon$ captures intra-class variability (noise, augmentation, etc.). 
Condition (ii) ensures that these structures are well-separated, with the 
constraint $\delta > 2\epsilon$ guaranteeing that the $\epsilon$-neighborhoods 
$\mathcal{B}(\mathcal{M}_i^{(m)}, \epsilon)$ and $\mathcal{B}(\mathcal{M}_j^{(m)}, \epsilon)$ 
are disjoint for $i \neq j$.

\subsection{Proof of Theorem~\ref{thm:optimality} (Optimality of Fusion Weights)}

\textbf{Statement.}
We seek to minimize the entropy-regularized geometric cost over the 
probability simplex $\Delta^{M-1}$:
\begin{equation}
    \min_{w \in \Delta^{M-1}} \; \mathcal{J}(w) 
    := \sum_{m=1}^M w^{(m)} \mathcal{C}^{(m)} - \tau H(w),
    \label{eq:optimization_objective}
\end{equation}
where $\Delta^{M-1} := \{w \in \mathbb{R}^M : \sum_{m=1}^M w^{(m)} = 1,\, w^{(m)} \ge 0 \,\forall m\}$ 
is the $(M-1)$-dimensional probability simplex, and 
$H(w) := -\sum_{m=1}^M w^{(m)} \ln w^{(m)}$ is the Shannon entropy with the standard convention $0\ln 0=0$.

\begin{proof}
The objective $\mathcal{J}(w)$ is strictly convex on $\Delta^{M-1}$ 
because the negative entropy $-H(w) = \sum_m w^{(m)} \ln w^{(m)}$ is 
strictly convex (its Hessian is $\text{diag}(1/w^{(1)}, \dots, 1/w^{(M)})$, 
which is positive definite for $w^{(m)} > 0$). The feasible region 
$\Delta^{M-1}$ is a compact convex set. By the extreme value theorem, 
a global minimizer exists. Strict convexity guarantees uniqueness.

The entropy barrier yields an interior stationary point for $\tau>0$; the KKT solution below has strictly positive weights. Thus, at the optimum, the non-negativity constraints are inactive and we enforce only the equality constraint $\sum_{m=1}^M w^{(m)} = 1$.

We form the Lagrangian:
\begin{equation}
    \mathcal{L}(w, \lambda) 
    := \sum_{m=1}^M w^{(m)} \mathcal{C}^{(m)} 
       + \tau \sum_{m=1}^M w^{(m)} \ln w^{(m)} 
       + \lambda \left( \sum_{m=1}^M w^{(m)} - 1 \right),
    \label{eq:lagrangian}
\end{equation}
where $\lambda \in \mathbb{R}$ is the Lagrange multiplier.

Taking the partial derivative with respect to $w^{(m)}$ and setting it 
to zero (KKT stationarity condition):
\begin{equation}
    \frac{\partial \mathcal{L}}{\partial w^{(m)}} 
    = \mathcal{C}^{(m)} + \tau (\ln w^{(m)} + 1) + \lambda = 0.
    \label{eq:kkt_condition}
\end{equation}

Solving for $\ln w^{(m)}$:
\[
\tau \ln w^{(m)} = -\mathcal{C}^{(m)} - \tau - \lambda
\quad \Longrightarrow \quad
\ln w^{(m)} = -\frac{\mathcal{C}^{(m)}}{\tau} - \frac{\lambda + \tau}{\tau}.
\]

Exponentiating both sides:
\begin{equation}
    w^{(m)} = \exp\left( -\frac{\mathcal{C}^{(m)}}{\tau} \right) 
              \cdot \underbrace{\exp\left( -\frac{\lambda + \tau}{\tau} \right)}_{=: Z^{-1}},
    \label{eq:unnormalized_weight}
\end{equation}
where $Z := \exp\left( \frac{\lambda + \tau}{\tau} \right)$ is a constant 
independent of $m$.
Imposing the constraint $\sum_{m=1}^M w^{(m)} = 1$:
\[
\sum_{m=1}^M \left[ \frac{1}{Z} \exp\left( -\frac{\mathcal{C}^{(m)}}{\tau} \right) \right] = 1
\quad \Longrightarrow \quad
Z = \sum_{j=1}^M \exp\left( -\frac{\mathcal{C}^{(j)}}{\tau} \right).
\]

Substituting $Z$ back into Eq.~\eqref{eq:unnormalized_weight} yields the 
optimal weights:
\begin{equation}
    w^{*(m)} = \frac{\exp\left( -\frac{\mathcal{C}^{(m)}}{\tau} \right)}
                    {\sum_{j=1}^M \exp\left( -\frac{\mathcal{C}^{(j)}}{\tau} \right)}.
    \label{eq:optimal_weights_app}
\end{equation}

This is the unique global minimizer of~\eqref{eq:optimization_objective}.
\end{proof}

\subsection{Proof of Theorem~\ref{thm:barrier} (Geometric Barrier Principle)}
\label{app:proof_barrier}

\textbf{Statement.}
We establish a lower bound on the cross-modal transport cost when 
modalities encode conflicting classes.

\textbf{Setup and assumptions.}
Consider the following scenario:
\begin{enumerate}[label=(\roman*), leftmargin=2em]
    \item \textbf{Source modality $n$.} Encodes the ground-truth class $y$ 
    with representation $z^{(n)} \in \mathcal{B}(\mathcal{M}_y^{(n)}, \epsilon)$.
    
    \item \textbf{Target modality $B$.} Encodes a conflicting class $k \neq y$ 
    with representation $z^{(B)} \in \mathcal{B}(\mathcal{M}_k^{(B)}, \epsilon)$.
    
    \item \textbf{Cross-modal mapping $\Phi_{n \to B}$.} Here $\Phi_{n\to B}(z):=z+v_{\Phi}^{(n\to B)}(z,0)$ is the one-step map induced by the learned cross-modal velocity field. We assume this
    mapping is \emph{$\xi$-semantically consistent} with $\xi \le \epsilon$. 
    Formally, for any point $u \in \mathcal{B}(\mathcal{M}_y^{(n)}, \epsilon)$,
    \begin{equation}
        \mathrm{dist}\big(\Phi_{n \to B}(u), \mathcal{M}_y^{(B)}\big) \le \xi.
        \label{eq:semantic_consistency}
    \end{equation}
    This means the mapping preserves class identity: points near the class-$y$ 
    structure in modality $n$ are mapped near the class-$y$ structure in 
    modality $B$, with at most $\xi$ distortion.
\end{enumerate}

\begin{proof}
Let $\tilde{z} := \Phi_{n \to B}(z^{(n)})$ denote the mapped representation.

\medskip
\noindent
By assumption (i), $z^{(n)} \in \mathcal{B}(\mathcal{M}_y^{(n)}, \epsilon)$. 
Applying the semantic consistency property~\eqref{eq:semantic_consistency}, 
we have:
\[
\mathrm{dist}\big(\tilde{z}, \mathcal{M}_y^{(B)}\big) \le \xi.
\]
This implies there exists a point $p_y \in \mathcal{M}_y^{(B)}$ such that:
\begin{equation}
    \|\tilde{z} - p_y\|_2 \le \xi.
    \label{eq:tilde_z_bound}
\end{equation}

\medskip
\noindent
By assumption (ii), $z^{(B)} \in \mathcal{B}(\mathcal{M}_k^{(B)}, \epsilon)$. 
Thus, there exists a point $p_k \in \mathcal{M}_k^{(B)}$ such that:
\begin{equation}
    \|z^{(B)} - p_k\|_2 \le \epsilon.
    \label{eq:z_A_bound}
\end{equation}

\medskip
\noindent
We aim to lower-bound $\|\tilde{z} - z^{(B)}\|_2$. By the triangle inequality:
\begin{equation}
    \|\tilde{z} - z^{(B)}\|_2 + \|\tilde{z} - p_y\|_2 + \|z^{(B)} - p_k\|_2 
    \ge \|p_y - p_k\|_2.
    \label{eq:triangle_ineq}
\end{equation}

Rearranging terms:
\[
\|\tilde{z} - z^{(B)}\|_2 
\ge \|p_y - p_k\|_2 - \|\tilde{z} - p_y\|_2 - \|z^{(B)} - p_k\|_2.
\]

\medskip
\noindent
By Assumption~\ref{ass:regularity}(ii) (Metric Separability), since $k \neq y$:
\begin{equation}
    \|p_k - p_y\|_2 \ge \delta.
    \label{eq:manifold_separation}
\end{equation}

Combining Eqs.~\eqref{eq:tilde_z_bound}, \eqref{eq:z_A_bound}, 
and~\eqref{eq:manifold_separation}:
\[
\|\tilde{z} - z^{(B)}\|_2 \ge \delta - \xi - \epsilon.
\]

Since we assume $\xi \le \epsilon$:
\begin{equation}
    \|\tilde{z} - z^{(B)}\|_2 \ge \delta - 2\epsilon.
    \label{eq:distance_lower_bound}
\end{equation}

\medskip
\noindent
The cross-modal transport cost is defined using the induced one-step map:
\[
\mathcal{E}_{\mathrm{inter}}^{(n \to B)} 
:= \|\Phi_{n \to B}(z^{(n)}) - z^{(B)}\|_2^2 
= \|\tilde{z} - z^{(B)}\|_2^2.
\]

Squaring both sides of~\eqref{eq:distance_lower_bound}:
\begin{equation}
    \mathcal{E}_{\mathrm{inter}}^{(n \to B)} \ge (\delta - 2\epsilon)^2.
    \label{eq:barrier_bound}
\end{equation}

By Assumption~\ref{ass:regularity}(ii), $\delta > 2\epsilon$, so 
$(\delta - 2\epsilon)^2 > 0$. This establishes a strict positive lower bound 
on the transport cost.
\end{proof}

\medskip
\noindent
\textbf{Remark on the semantic consistency assumption.}
The $\xi$-semantic consistency property~\eqref{eq:semantic_consistency} 
is an idealized local regularity assumption. In GMF, the direction-specific map is learned from matched multimodal pairs by Eq.~\eqref{eq:loss_inter}, which encourages compatible representations to align locally. This empirical objective motivates the assumption, but the pointwise condition remains stronger than the training loss itself.

\subsection{Proof of Corollary}
\label{app:proof_corollary}

\begin{proof}
Let $D=(\delta-2\epsilon)^2$. We prove the same bound as Corollary~\ref{cor:suppression}:
\[
\gamma_{\mathrm{int}}^{(B)}
\le
\lambda(M-1)\exp\left(-\frac{D}{\kappa}\right).
\]
We then account for the numerical floor in the stabilized gate and derive the corresponding suppression of the unnormalized contribution.

\medskip
\noindent
By Eq.~\eqref{eq:interaction}, the interaction gate for modality $B$ is
\begin{equation}
    \gamma_{\mathrm{int}}^{(B)} 
    = \lambda \sum_{n \neq B} r^{(n)}
       \exp\left( -\frac{\mathcal{E}_{\mathrm{inter}}^{(n \to B)}}{\kappa} \right),
    \quad
    r^{(n)}=\sigma(\theta_r-\mathcal{E}_{\mathrm{intra}}^{(n)}).
    \label{eq:interaction_gate_app}
\end{equation}

For every active reliable neighbor $n$ of $B$, Theorem~\ref{thm:barrier} gives
\[
\mathcal{E}_{\mathrm{inter}}^{(n \to B)} \ge D.
\]
Thus,
\[
\exp\left( -\frac{\mathcal{E}_{\mathrm{inter}}^{(n \to B)}}{\kappa} \right) 
\le \exp\left( -\frac{D}{\kappa} \right).
\]

Since $0 \le r^{(n)} \le 1$,
\begin{equation}
    \gamma_{\mathrm{int}}^{(B)} 
    = \lambda \sum_{n \neq B} r^{(n)}
       \exp\left( -\frac{\mathcal{E}_{\mathrm{inter}}^{(n \to B)}}{\kappa} \right)
    \le \lambda (M-1) \exp\left( -\frac{D}{\kappa} \right).
    \label{eq:gamma_upper_bound}
\end{equation}

\medskip
\noindent
Since $\tilde{\gamma}_{\mathrm{int}}^{(B)}=\gamma_{\mathrm{int}}^{(B)}+\epsilon_\gamma$,
\begin{equation}
    \tilde{\gamma}_{\mathrm{int}}^{(B)}
    \le
    \lambda(M-1)\exp\left( -\frac{D}{\kappa} \right)+\epsilon_\gamma.
    \label{eq:tilde_gamma_upper_bound}
\end{equation}

\medskip
\noindent
Define the unnormalized contribution
\[
s_B
= \exp\left(-\frac{\mathcal{E}_{\mathrm{intra}}^{(B)}}{\tau}\right)
\tilde{\gamma}_{\mathrm{int}}^{(B)}.
\]
Since $\mathcal{E}_{\mathrm{intra}}^{(B)}\ge 0$, we have $\exp(-\mathcal{E}_{\mathrm{intra}}^{(B)}/\tau)\le 1$. Therefore,
\begin{equation}
    s_B
    \le
    \tilde{\gamma}_{\mathrm{int}}^{(B)}
    \le
    \lambda(M-1)\exp\left( -\frac{D}{\kappa} \right)+\epsilon_\gamma,
    \label{eq:score_upper_bound}
\end{equation}
which proves exponential suppression of the unnormalized contribution up to the numerical floor.

Finally, assume that the denominator contains reliable non-conflicting evidence with total unnormalized contribution at least $c_0>0$, i.e.,
\[
\sum_{j\ne B}s_j \ge c_0.
\]
Then
\[
w^{*(B)}
=
\frac{s_B}{s_B+\sum_{j\ne B}s_j}
\le
\frac{s_B}{c_0}
\le
\frac{\lambda(M-1)}{c_0}\exp\left( -\frac{D}{\kappa} \right)+\frac{\epsilon_\gamma}{c_0}.
\]
\end{proof}

\end{document}